\newtheorem{definition}{Definition}
\newtheorem{theorem}{Theorem}[section]
\newtheorem{proposition}[theorem]{Proposition}
\newtheorem{lemma}[theorem]{Lemma}
\theoremstyle{definition}
\newtheorem{example}{Example}[section]
\newcommand{\R}{\mathbb R}
\newcommand{\C}{\mathbb C}
\newcommand{\Img}{\text{Im } }
\newcommand{\Ker}{\text{Ker } }
\newcommand{\spn}{\mathrm{span}}
\newcommand{\changedSB}[1] {#1} 
\newcommand{\changedSY}[1] {#1} 
\newcommand{\changedSBa}[1] {#1} 
\title{\changedSB{Weighted Combinatorial Laplacian and its Application to Coverage Repair in Sensor Networks}}
\author{Shunsaku Yadokoro \quad and \quad Subhrajit Bhattacharya 
\footnote{Department of Mechanical Engineering and Mechanics, Lehigh University, Bethlehem, PA, U.S.A. email: \texttt{[shy523,sub216]@lehigh.edu}.
	
We gratefully acknowledge the support of Air Force Office of Scientific Research (AFOSR) award number FA9550-23-1-0046.}
}
\date{}
\begin{document}

\maketitle

\begin{abstract}
    We define the weighted combinatorial Laplacian operators on a simplicial complex and investigate their spectral properties. 
    \changedSB{Eigenvalues close to zero}
    and the corresponding eigenvectors of them are especially of our interest, and we show that they can detect almost $n$-dimensional holes in the given complex.
    \changedSB{Real-valued weights on simplices allow gradient descent based optimization, which in turn gives}
    an efficient dynamic coverage repair algorithm for the sensor network of \changedSB{a} mobile robot team.
    \changedSBa{Using the theory of relative homology, we also extend the problem of dynamic coverage repair to environments with obstacles.}
\end{abstract}

\vspace{4em}

\section{Introduction \changedSB{and Related Work}}

\subsection{\changedSB{Motivation and Contribution}}

\changedSB{Graphs have been used extensively to model networks of robot or sensor networks~\citep{newman2006structure,CVETKOVIC20102257}. One of the fundamental algebraic tools relevant to graphs is the graph Laplacian matrix, the spectrum of which encodes the connectivity of the graph~\citep{godsil2001algebraic}. In \emph{weighted graphs}, one assigns non-negative, real-valued weights or importance to each edge of the graph, with a zero weight on an edge being equivalent to the edge being non-existent, thus allowing a continuum between different graph topologies. Furthermore, for robot or mobile sensor networks, the real-valued weights naturally correspond to the separation or distance between pairs of agents (with the weights being inversely related to the distances so that agents that are closer to each other are strongly connected, while agents that are farther from each other are weakly connected). 
A \emph{weighted graph Laplacian} can be constructed accordingly, and real-valued optimization objectives can be formulated to control the connectivity of a network~\citep{zavlanos2008distributed,leiming-subhrajit}.

On the other hand, simplicial complexes, which are a natural higher-dimensional extension of graphs, have also been used to model networks of robots and mobile sensors~\citep{derenick}. A simplicial complex is accompanied by a corresponding algebraic construction called \emph{combinatorial Laplacian} matrix, the spectrum of which is related to the \emph{holes} in the network. However, the definition of combinatorial Laplacian cannot naturally incorporate weights on the simplices and hence there does not exist any natural continuum between different simplicial complex topologies. While the eigenvectors of the combinatorial Laplacian have been used to identify holes in coverage of mobile sensor networks~\citep{derenick}, the control of the mobile sensors for mending such holes involved finding the \emph{tightest cycle} around the holes and then navigating the mobile sensors towards the holes.

The main technical contribution of this paper is a new theory of a \emph{weighted combinatorial Laplacian} of a simplicial complex that can encode real-valued weights on the simplices, providing a continuum between different simplicial complex topologies. This allows us to design gradient descent algorithms for directly controlling the mobile sensors to optimize objective functions based on the spectrum of the weigted combinatorial Laplacian for mending holes in the complex.
}

\subsection{Overview}

\begin{wrapfigure}{r}{0.25\textwidth} \vspace{-5em}
    \centering
    \includegraphics[width=0.25\textwidth]{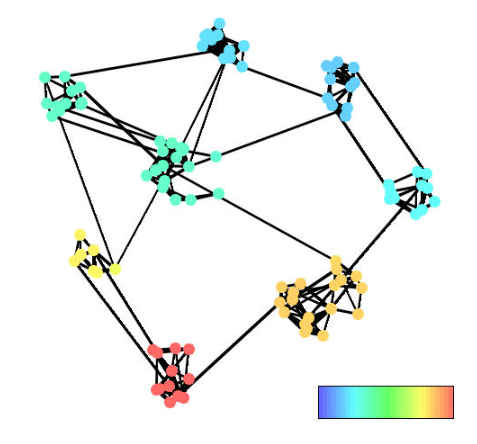}
    \caption{The Fiedler vector of a graph} \label{fig:fielder}
    \vspace{-2em}
\end{wrapfigure}
\emph{\textbf{\changedSBa{Weighted Graph Laplacian:}}}
The important special case for the theory we provide in this paper, known as the \textit{weighted graph Laplacian operator}, has been well-studied and has found wide range of applications in computational geometry and network theory \citep{leiming-subhrajit}. The graph Laplacian operator can be constructed as a linear operator between the vector spaces over a field (typically $\R$ or $\C$) with basis the vertices of the graph. It was Fiedler~\citep{fiedler} who first noticed that the smallest non-zero eigenvalue of the graph Laplacian has some implication about the connectivity of the graph. 
Figure~\ref{fig:fielder} shows the coloring of the vertices of a graph based on the eigenvector corresponding to the smallest nonzero eigenvalue of the weighted graph Laplacian operator, often called the \textit{Fiedler vector}.
Edges of the graph are weighted, that is, each edge is assigned a real value, and in this particular example, weights are set to be the inverse of the length of the edge, giving the notion of ``the shorter the edge, the stronger the connection.'' 
Then it turns out that the resulting weighted Laplacian operator, and in particular the Fiedler vector, show how much the graph is ``almost disconnected,'' in the sense that the difference of colors between a pair of vertices measures the connection between the two vertices through the graph. Hence, by utilizing this ``almost disconnectedness,'' we can capture the clusters in the graph, the partition of vertices with similar colors. 

 \vspace{0.5em}
\noindent\emph{\textbf{\changedSBa{Weighted Laplacians and ``Almost Holes'':}}}
On the other hand, in a branch of mathematics called algebraic topology, we can apply to a graph (or more generally a simplicial complex) what is called homology theory, and for each non-negative integer $n$, we are able to measure the $n$-dimensional holes in the graphs (or the simplicial complexes) by computing their $n$th homology. Figure \ref{fig:illustration of holes} informally illustrates the idea of the $n$-dimensional holes in a simplicial complex for small $n$.
The important thing to

\begin{wrapfigure}{l}{0.35\textwidth} \vspace{-3em}
	\centering
	\includegraphics[scale=0.25]{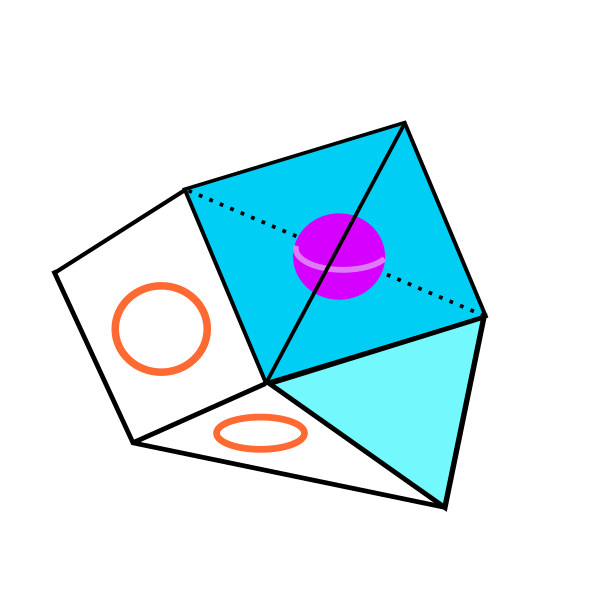}
	\caption{Illustration of holes in a simplicial complex. There are two 1-dimensional holes (shown as orange circles) and one 2-dimensional hole (shown as a purple sphere)}
	\label{fig:illustration of holes} \vspace{-2em}
\end{wrapfigure}
\noindent
note here is that the disconnectedness of a graph (or a simplicial complex) is regarded as a 0-dimensional hole in the homology theory, as the disconnectedness is measured completely by the 0th homology. Hence, weighted graph Laplacian operator discussed above is essentially measuring the ``almost 0-dimensional holes'' in the graph. Our purpose of this paper is to developing an analogue of weighted graph Laplacian for each dimension $n$ so that we can capture the ``almost $n$-dimensional holes'' in a simplicial complex. In this sense, the weighted graph Laplacian can be regarded as the 0th operator of our weighted combinatorial Laplacian operators. 

\vspace{0.5em}
\noindent\emph{\textbf{\changedSBa{Illustration:}}}
To illustrate what our weighted combinatorial Laplacian can measure, let us take a look at the case $n=1$ and explain how our weighted Laplacian can capture ``almost 1-dimensional holes''. The Figure \ref{fig:complex} is a simplicial complex (a \changedSBa{higher-dimensional extension of graphs} with \emph{\changedSBa{filled-in} triangle \changedSBa{elements}}). Since, \changedSBa{in this example}, each triangle is filled, combinatorially there is no 1-dimensional holes present in this complex, and in fact we can verify this fact by computing its 1st homology. However, one can tell that there are some parts of the complex where the vertices and the edges are scattered, and this subtlety can be captured well by our weighted Laplacian operators: The histogram shown in Figure \ref{fig:histrogram} shows the comparison of the smallest eigenvalues of our 1st weighted Laplacian operator. From this, one may notice that there is a \changedSBa{relatively large} gap between the third and fourth eigenvalue, and this is how our weighted Laplacian tells us that there are three ``almost 1-dimensional holes'' in the complex. Furthermore, the corresponding eigenvectors of the three eigenvalues \changedSBa{indicate} where these three holes are placed in the simplicial complex: In Figure \ref{fig:three eigenvecs}, the three eigenvectors are represented by colors of the edges, and it is clear which edges are forming the ``almost 1-dimensional holes''. 

\begin{figure}[H]
\centering
\begin{subfigure}[b]{0.5\textwidth}
\centering
\includegraphics[width=0.75\textwidth]{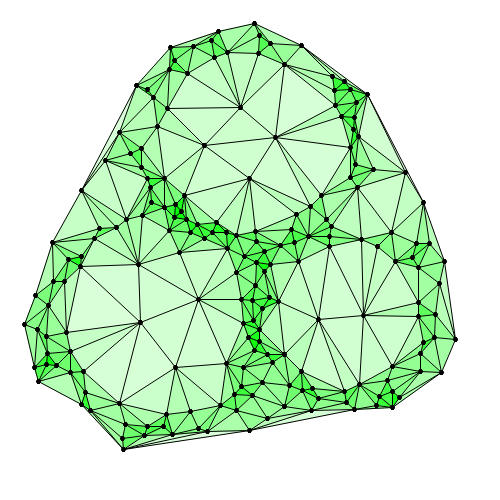} 
\caption{Simplicial complex \changedSB{constructed from Delaunay triangulation of a set of sensors on the Euclidean plane. Weight on a edge is inversely related to the distance between the vertices that make up the edge, and weight on a triangle is related to the product of \changedSBa{the weights on} its boundary edges. \changedSBa{Colors on the triangles indicate the weights on the $2$-simplices -- lighter color indicates lower weight, while darker color imply higher weights.}}}
\label{fig:complex}
\end{subfigure}
\hspace{1em}
\begin{subfigure}[b]{0.45\textwidth}
\centering
\includegraphics[width=0.8\textwidth]{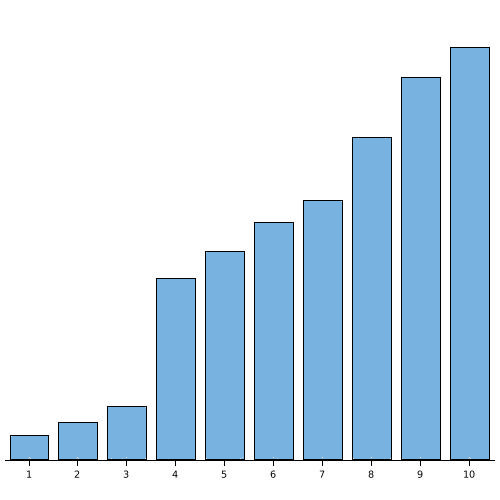}
\caption{The smallest eigenvalues of our \changedSB{proposed} 1st \changedSB{weighted} Laplacian operator, \changedSB{$\tilde{\mathcal{L}}_1$}. Note how the first three eigenvalues are significantly smaller compared to the others\changedSY{, and this corresponds to the fact that there are three ``almost holes'' in the complex.}}
\label{fig:histrogram}
\end{subfigure}
\hfill
\begin{subfigure}[b]{0.9\textwidth}
\includegraphics[width=\textwidth]{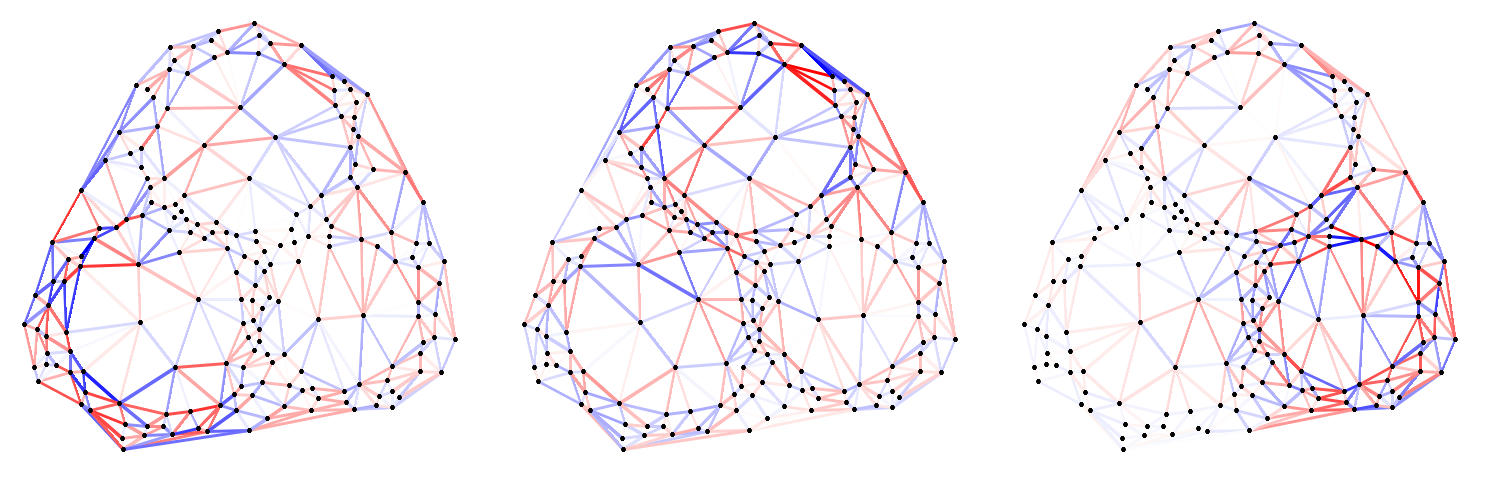}
\caption{Eigenvectors of graphs corresponding to the smallest three eigenvalues.}
\label{fig:three eigenvecs}
\end{subfigure}
\caption{\changedSB{\emph{\changedSBa{Preview/Overview of the main technical contribution (actual result using our proposed Weighted Combinatorial Laplacian)}:} A demonstration} of \changedSB{the proposed} 1st weighted Laplacian operator, \changedSB{$\tilde{\mathcal{L}}_1$. Although the simplicial complex does not literally have a hole in it (and hence the null-space of $\tilde{\mathcal{L}}_1$ is empty), the fact that the sensors\changedSBa{/vertices} are placed in the shape of three touching circular rings, creates low-weight simplices in the interior of the rings. This manifests as the three small eigenvalues of $\tilde{\mathcal{L}}_1$ shown in (b).}}
\label{fig:example of l1}
\end{figure} 

Another advantage of our weighted Laplacian operators is the fact that the eigenvalues are the real-valued piece-wise smooth functions of weights, whose gradient computation is not too expensive even for a large simplicial complex, as we shall see. Hence, given that we have some control over the dynamics of the simplicial complex, say, positions of the vertices, our Laplacian operators will give us a good control over the holes of the simplicial complex. In this way, we will apply our weighted Laplacian operators to develop the efficient coverage repair algorithm for sensor networks.

\vspace{0.5em}
\noindent\emph{\textbf{\changedSBa{Organization:}}}
This paper is structured as follows: In Section~\ref{sec:background}, we provide some background information regarding simplicial complex, homology theory and the combinatorial Laplacian. \changedSBa{Section~\ref{sec:weighted-laplacian} contains the main technical contribution of the paper, where} we define our weighted combiantorial Laplacian and investigate its properties. In Section~\ref{sec:application}, we switch our gear to applications and develop the algorithm for dynamic coverage repair of sensor network.

\section{Background}\label{sec:background}
In this section, \changedSBa{after introducing come notations and conventions}, we provide a brief overview of \textit{homology theory} and \textit{Combinatorial Laplacian}, which are the basic machinery of our method. We first introduce \textit{Simplicial Complex}, the higher-dimensional analogue of graphs. We then start discussing how the homology vector spaces can be defined on a simplicial complex and how it can measure the holes in the simplicial. Only after that we introduce the combinatorial Laplacian and list its fundamental properties, which we will make a heavy use of in the following section.
\changedSBa{For more details on simplicial complexes, homology theory and combinatorial Laplacian operators, the reader can refer to a standard text on Algebraic Topology such as~\citep{hatcher,ghrist2014elementary}.}

\subsection{Notations and Conventions}
The norm of a vector and a linear transformation (or a matrix) will be denoted by $\| \cdot \|$. Unless otherwise noted, the norm is the two norm. The inner product of the two vectors will be denoted by $\langle \cdot, \cdot \rangle$. 

\changedSB{Given a linear operator, $\mathcal{O}: \mathscr{A} \rightarrow \mathscr{B}$, its adjoint with respect to an inner product is denoted by $\mathcal{O}^\ast$.
For a given basis for the vector spaces, if $O$ represents the matrix representation of the linear operator, then with the standard Euclidean metric with respect to the basis, the matrix representation of the adjoint of the operator is the transpose, $O^T$.}

In the literature of the graph Laplacian operators and the combinatorial Laplacian operators, it is more common to define them over cochain complexes than chain complexes. However, we will present our combinatorial Laplacian in terms of chain complexes with real coefficients, so that we explain the properties of Laplacian in terms of the holes of the simplicial complex. Note that, because we will keep our simplicial complex to be finite, homology with real coefficients and cohomology with real coefficients are essentially the same, as there will always be an isomorphism between the two when defined on the same simplicial complex.

\subsection{\changedSB{Weighted Graph, Incidence Matrix and Weighted Graph Laplacian}}

\vspace{0.5em}\noindent
\emph{\textbf{Graph and Boundary Map:}} An (undirected) graph consists of a finite set, $V$, the elements of which are called \emph{vertices}, and a collection of $2$-element subsets of $V$ called the \emph{edge set}, $E$.
Upon assigning an arbitrary orientation to each edge, one can define a linear map between an
$|E|$-dimensional vector space (which can be interpreted as a vector space spanned by the edges) and
a $|V|$-dimensional vector space (which can be interpreted as a vector space spanned by the vertices\footnote{For technical reasons, this vector space is constructed as a span of singletons ($1$-element subsets) of $V$ rather than the elements of $V$ themselves. However, for simplicity, for now, we denote both the vertex set and the set containing all the singletons of the vertex set by $V$.})
called the \emph{incidence map} or the \emph{boundary map}, 
${\mathcal B}: \spn(E) \rightarrow \spn(V)$,
which has the following property:
If $\{x,y\}\in E$ is an edge emanating from $x\in V$ to $y\in V$ (according to the arbitrarily-chosen orientation), then
\[ {\mathcal B}(\{x,y\}) = \{y\} - \{x\} \]
Using $E$ and $V$ as the basis for the domain and co-domain of ${\mathcal B}$, one can write a matrix representation of ${\mathcal B}$, which is called the \emph{incidence matrix} or the \emph{boundary matrix}, $B\in\mathbb{R}^{|V|\times|E|}$, the elements of which can be described explicitly as
\[ B_{ik} = \left\{ \begin{array}{l} 
1, \quad\text{if the $k$-th edge is incident into the $i$-th vertex}, \\
-1, \quad\text{if the $k$-th edge is emanating from the $i$-th vertex}, \\
0, \quad\text{otherwise}
\end{array} \right.\]
An explicit example of a boundary matrix, $B_1$, is shown in Figure~\ref{fig:boundary-matrix-illustration}.

\vspace{0.5em}\noindent
\emph{\textbf{Weighted Graph and Weighted Boundary Map:}}
One can assign weights or importance to the edges (with a zero weight being equivalent to a non-existent edge). We thus define the weight function, $w:E \rightarrow \mathbb{R}_{\geq 0}$, and a corresponding weighted insidence/boundary map $\tilde{{\mathcal B}}: \spn(E) \rightarrow \spn(V)$ such that $\tilde{{\mathcal B}}(\{x,y\}) = w(\{x,y\}) \, \big(\{y\} - \{x\}\big)$.
The matrix representation of this map is thus given by the $|V|\times|E|$ matrix, $\tilde{B}$, whose elements are
\[ \tilde{B}_{ik} = \left\{ \begin{array}{l} 
w(e_k), \quad\text{if the $k$-th edge is incident into the $i$-th vertex}, \\
-w(e_k), \quad\text{if the $k$-th edge is emanating from the $i$-th vertex}, \\
0, \quad\text{otherwise}
\end{array} \right.\]
where $e_k$ is the $k$-th edge in $E$.

More compactly, defining the weight operator, $\mathcal{W}: \spn(E) \rightarrow \spn(E)$, such that $\mathcal{W}(\{x,y\}) = w(\{x,y\}) \, \{x,y\}$, we have $\tilde{{\mathcal B}} = {\mathcal B}\, \mathcal{W}$. If $W$ is the corresponding matrix representation of the weight operator (which is a diagonal matrix in the basis of the simplices), we have $\tilde{B} = B W$.



\vspace{0.5em}\noindent
\emph{\textbf{Graph Laplacian and Weighted Graph Laplacian:}}
The graph Laplacian is a linear map $\mathcal{L}: V \rightarrow V$, and is defined as $\mathcal{L} = {\mathcal B} \, {\mathcal B}^\ast$.
In terms of the matrix representation the Laplacian matrix is $L = B B^T$.
The spectrum of the Laplacian matrix has been studied extensively in algebraic graph theory and network theory~\citep{}.
It is a well-known fact that the Laplacian is a symmetric, positive semi-definite operator that is independent of the choice of orientation on the edges, and that the number of connected components of a graph is equal to the dimension of the null space (the multiplicity of the zero eigenvalue) of the Laplacian.

\begin{figure} 
	\centering
	\begin{subfigure}[b]{0.9\textwidth}\centering
		\includegraphics[width=0.85\columnwidth,trim=0 0 0 0,clip=true]{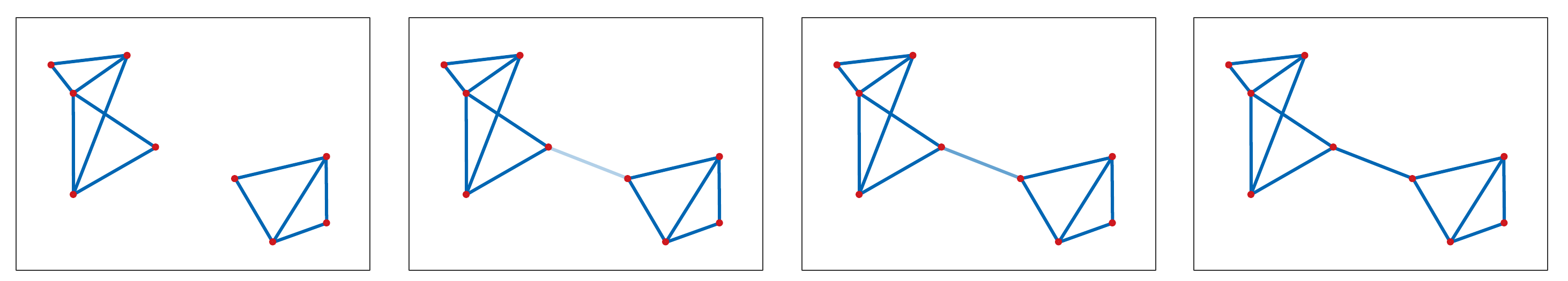}
		\caption{A continuum between fully disjoint graph ($2$-dimensional null-space of weighted graph Laplacian, $\tilde{\mathcal{L}}_0$ due to presence of two components) and a connected graph ($1$-dimensional null-space of weighted graph Laplacian, $\tilde{\mathcal{L}}_0$).}
		\label{fig:weighted-graph}
	\end{subfigure}
	%
		\hfill
	\begin{subfigure}[b]{0.9\textwidth}\centering
		\includegraphics[width=0.85\columnwidth,trim=0 0 0 0,clip=true]{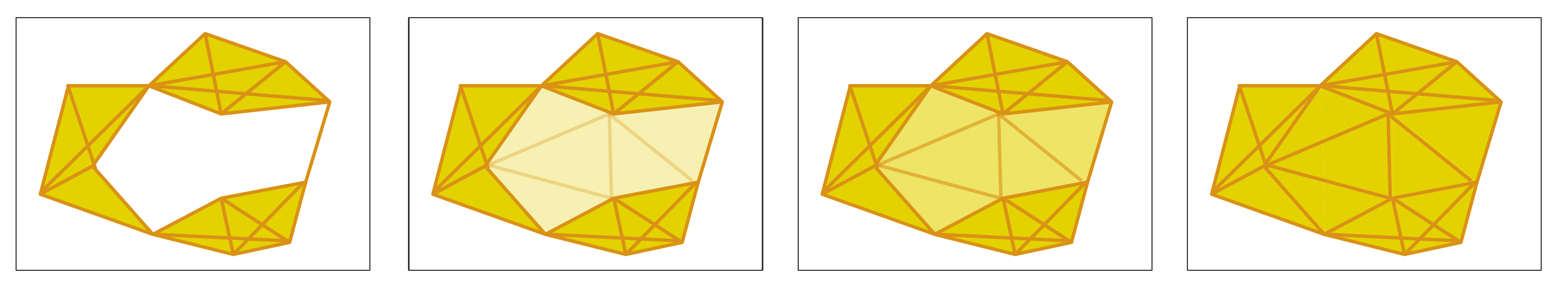}
		\caption{One can conceive of an analogous continuum between a simplicial complex with a hole ($1$-dimensional null-space of a weighted $\tilde{\mathcal{L}}_1$) and complex without holes ($0$-dimensional null-space of a weighted $\tilde{\mathcal{L}}_1$). The main technical contribution of the paper is the development of the theory of weighted higher-order/combinatorial Laplacian.}
		\label{fig:weighted-simplicial-complex}
	\end{subfigure}
 
	\caption{\changedSB{Just as weighted graphs allow us to reason about and control clustering and weak connection between clusters through analysis of the spectrum and eigenvectors of $\tilde{\mathcal{L}}_0$ using gradient-based algorithms, a weighted simplicial complex would allow us to reason about and control large-scale holes bounding low-weight $2$-simplices through the analysis of the spectrum and eigenvectors ($2$-nd-order modes) of $\tilde{\mathcal{L}}_1$. The main technical contribution of the paper is the development of the theory of weighted higher-order/combinatorial Laplacian, and apply it to the control of sensor networks to fill and create holes in sensor coverage.}} \label{fig:graph-cplx-parallel}
	\vspace{-1em}
\end{figure} 

A natural extension of this definition to weighted graphs gives rise to the \emph{weighted graph Laplacian} which is defined as $\tilde{\mathcal{L}} = \tilde{{\mathcal B}} \tilde{{\mathcal B}}^\ast$, with the corresponding matrix representation $\tilde{L} = \tilde{B} \tilde{B}^T = B W^2 B^T$.
Real-valued weights on edges enable a continuum between graphs in which an edge is existing (with positive weight) and a graph in which the same edge is removed (the edge with zero weight) -- see Figure~\ref{fig:weighted-graph}. This continuum allows construction of gradient-based algorithms based on the spectrum of the weighted graph Laplacian in order to control the connectivity of the graph. Weighted graph Laplacian has hence been extensively used to control the connectivity of sensor networks~\citep{zavlanos2008distributed} and for transmission control in information networks~\citep{leiming-subhrajit}.



\begin{figure}
\centering
\includegraphics[width=0.85\columnwidth,trim=0 0 0 0,clip=true]{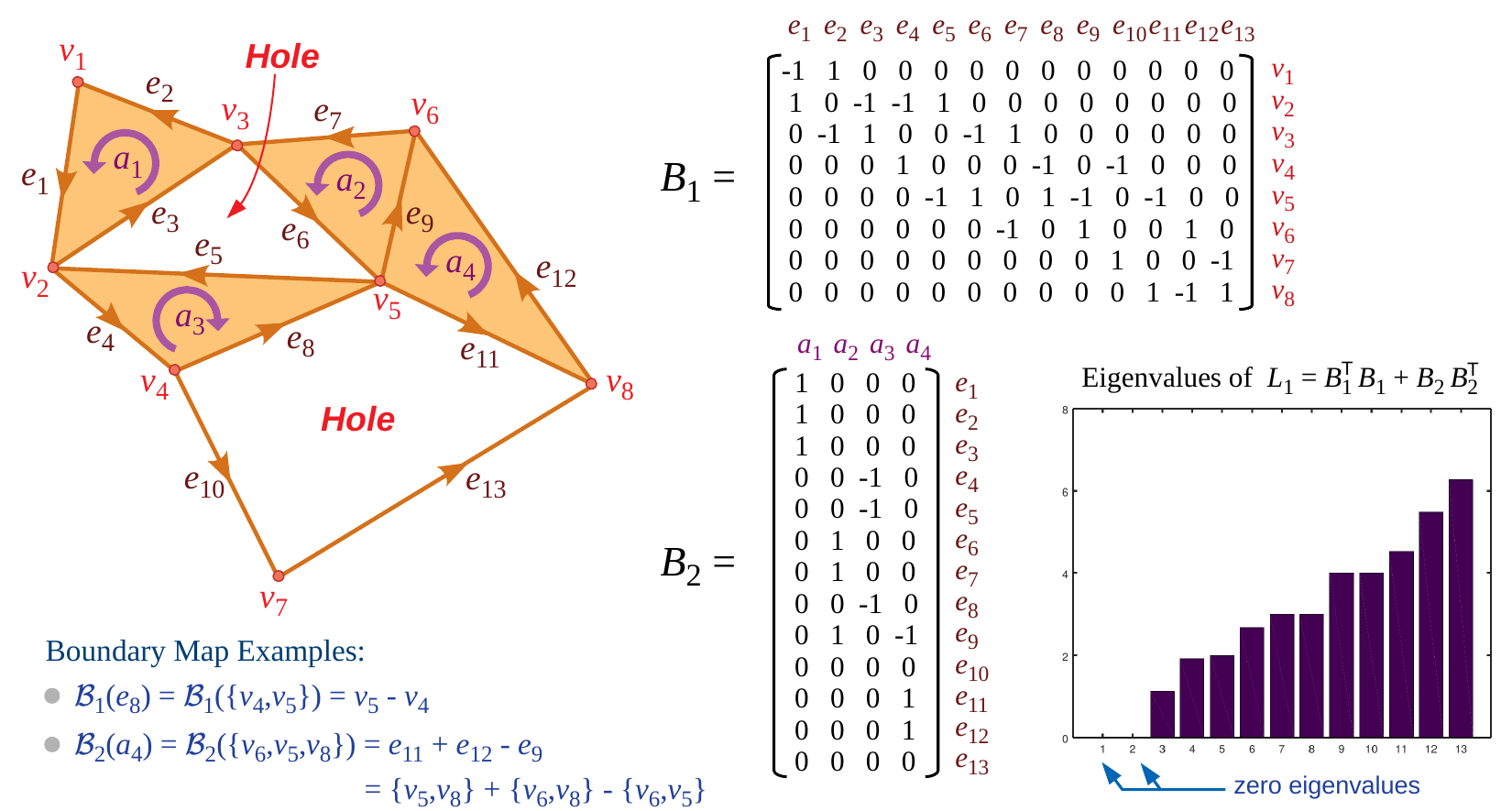}
	\caption{\changedSB{Example of a simplicial complex of dimension $2$ with arbitrary orientations assigned to the $1$ and $2$ simplices. The corresponding $1$-st and $2$-nd order (combinatorial, unweighted) boundary matrices are shown. Note how the higher order Laplacian matrix, $L_1$, has a null-space of dimension $2$ corresponding to the two holes (cycles that do not form the boundary of a set of $2$-simplices) marked in the complex.
 }}
	\label{fig:boundary-matrix-illustration}
\end{figure}

\subsection{Simplicial Complexes and Homology}

\changedSB{
A simplicial complex can be thought of as a higher-dimensional generalization of graphs, in which we not only can have the vertices (also called $0$-simplices) and edges (also called $1$-simplices, described by $2$-element subsets of $V$), but also $2$-simplices (can be thought of as ``\emph{triangle elements}'', described by $3$-element subsets of $V$) and higher-dimensional simplices.
Algebraic constructions using a similicial complex involves higher-dimensional extensions of the boundary map and Laplacian matrix, and allows us to reason about ``\emph{holes}'' in the complex, just as the graph Laplacian allowed reasoning about connected components of the graph.

The following technical discussions formalize the definition of simplicial complex, the boundary map, and introduces the concept of homology.}


\begin{definition}[Simplicial Complex]
    For a finite set $V$, a simplicial complex $X$ is collection of subsets of $V$ that is closed under inclusion: That is, if $c \in X$
    \changedSB{and}
    $c^{\prime} \subset c$, \changedSB{then} $c^{\prime} \in X$. Each element $c\in X$ is called $(|c|-1)$-simplex, where $|c|$ is the cardinality of $c$, and the subset of $X$ that contains all $n$-simplices is denoted $X_n$.  
\end{definition}

\changedSB{In the definition above, $V$ is the vertex set, $X_0$ contains all the singletons of $V$ (the $0$-simplices), $X_1$ contains $2$-element subsets of $V$ (the $1$-simplices or edges), $X_2$ contains $3$-element subsets of $V$ (the $2$-simplices), and so on.
Given any $n$-simplex, $c\in X_n$, the condition of \emph{closure under inclusion} implies that the \emph{faces} of $c$ are also in the simplicial complex.
An explicit example is given below.}

\begin{example}
    1-simplices contain exactly two elements of $V$, and by definition, whenever a simplicial complex $X$ has a 1-simplex $\{x, y\} \subset V$ it must also contain $\{x\}, \{y\} \subset X$. This allows us to think of 0-simplices and 1-simplices (respectively) as vertices and edges (respectively,) where 1-simplices are bridging over 0-simplices. Hence, when a simplicial complex have 0-simplices and 1-simplices only, then it can be regarded as a simple graph. For example, \changedSB{the} graph, $G$, shown in Figure \ref{fig: a graph} can be written as 
    \begin{equation*}
        G = \{\{v_0\}, \{v_1\}, \{v_2\}, \{v_3\}, \{v_0, v_1\}, \{v_0, v_2\}, \{v_0, v_3\}, \{v_1, v_2\}, \{v_1, v_3\}, \{v_2, v_3\} \}.
    \end{equation*}
    \changedSB{whereas the simplicial complex, $H$, shown in Figure \ref{fig: a simplicial complex} can be written as 
    \begin{equation*}
        H = \{\{v_0\}, \{v_1\}, \{v_2\}, \{v_3\}, \{v_0, v_1\}, \{v_0, v_2\}, \{v_0, v_3\}, \{v_1, v_2\}, \{v_1, v_3\}, \{v_2, v_3\}, \changedSB{\{v_1,v_2,v_0\}} \}.
    \end{equation*}}
\end{example}

\begin{figure}
    \centering
    \begin{subfigure}[b]{0.3\textwidth}
    \centering
    \begin{tikzpicture}
        \node[below] at (1, 0.5) {\textbf{$v_0$}};
        \node[above] at (1, 1.732) {\textbf{$v_1$}};
        \node[left] at (0,-0.2) {\textbf{$v_2$}};
        \node[right] at (2, -0.2) {\textbf{$v_3$}};
        \draw[thick] (0,0) -- (2,0);
        \draw[thick] (0,0) -- (1,1.732);
        \draw[thick] (0,0) -- (1,0.577);
        \draw[thick] (1,0.577) -- (2,0);
        \draw[thick] (1,1.732) -- (2,0);
        \draw[thick] (1,0.577) -- (1,1.732);
    \end{tikzpicture}
    \caption{A Graph $G$}
    \label{fig: a graph}
    \end{subfigure}    
    \begin{subfigure}[b]{0.3\textwidth}
    \centering
    \begin{tikzpicture}
    \draw[gray, fill] (1,1.732) -- (1,0.577) -- (0,0);
    \node[below] at (1, 0.5) {\textbf{$v_0$}};
    \node[above] at (1, 1.732) {\textbf{$v_1$}};
    \node[left] at (0,-0.2) {\textbf{$v_2$}};
    \node[right] at (2, -0.2) {\textbf{$v_3$}};
    \draw[thick] (0,0) -- (2,0);
    \draw[thick] (0,0) -- (1,1.732);
    \draw[thick] (0,0) -- (1,0.577);
    \draw[thick] (1,0.577) -- (2,0);
    \draw[thick] (1,1.732) -- (2,0);
    \draw[thick] (1,0.577) -- (1,1.732);
    \end{tikzpicture}
    \caption{A Simplicial Complex $H$}
    \label{fig: a simplicial complex}
    \end{subfigure}
    \caption{Example of a graph and a simplicial complex}
\end{figure}
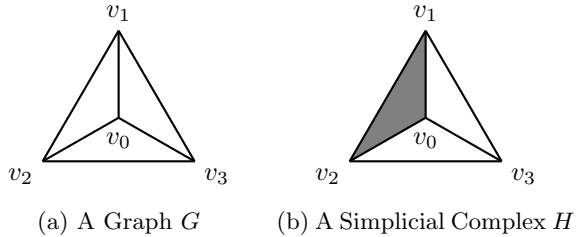

Without loss of generality we will assume that the union of 0-simplices is equal to $V$, and thus we refer to call $V$ as a \textit{vertex set}.
\begin{example}
    More generally, for $n\geq 0$, an $n$-simplex can be regarded as an $n$-dimensional (solid) triangles. A 2-simplex $\{a, b, c\} \subset X$ can be seen as a normal triangle with three vertices $\{a\}, \{b\}$ and $\{c\}$, and similarly a 3-simplex $\{a, b, c, d\} \subset X$ can be seen as a tetrahedron with three vertices $\{a\}, \{b\}, \{c\}$ and $\{d\}$.
\end{example}

\changedSB{The key algebraic description of a simplicial complex is encapsulated in the definition of the higher-order (combinatorial) boundary maps.
In this generalization of the boundary map, the vector space spanned by the $0$-simplices is denoted by $C_0$, vector space spanned by the $1$-simplices (edges) is denoted by $C_1$, vector space spanned by the $2$-simplices is denoted by $C_2$, and in general, vector space spanned by the $n$-simplices is denoted by $C_n$.
Upon assigning an arbitrary orientation to each simplex,
the boundary maps are 
denoted by ${\mathcal B}_n:C_n \rightarrow C_{n-1}$, whose matrix representations (in the basis of the simplices) are $B_n$ (see Figure~\ref{fig:boundary-matrix-illustration} for an explicit example of $B_1$ and $B_2$). The technical discussions below gives the formal definitions and highlights the key property of the boundary maps, which paves the way to eason about holes in the simplicial complex.}

We associate \changedSB{with} a simplicial complex a natural algebraic structure that encodes the boundary information, called \textit{Chain Complex}. 

\begin{definition}[Chain Complex and Boundary Maps over Reals]
    A sequence of vector spaces $\{C_n\}_{n\geq 0}$ over $\R$ equipped with the set of linear maps ${\mathcal B}_n: C_n \to C_{n-1}$ satisfying ${\mathcal B}_{n-1} \circ {\mathcal B}_{n} = 0$ for each $n>0$ is called a chain complex and the maps ${\mathcal B}_n$ are called boundary maps.
\end{definition}

There is a natural way of assigning an algebraic structure of this kind to simplicial complexes, which is stated in the next proposition. We omit the proof, but interested readers can find the proof in a standard textbook of algebraic topology (See, for example, \citep{hatcher}.)
\begin{proposition}[Simplicial Chain Complex~\changedSBa{\citep{hatcher}}]
    Let $X$ be a simplicial complex over a vertex set $V$, and let $C_n\,X$ be the vector spaces over $\R$ with basis the $n$-simplices of $X$ and let ${\mathcal B}_{\changedSB{n}}: C_n \to C_{n-1}$ to be the linear map obtained by linearly extending the following formula:
    \begin{equation*}
        {\mathcal B}_{\changedSB{n}} (\{v_0, v_1, \cdots, v_n \}) = \sum_{i=0}^n (-1)^i \{v_0, v_1, \cdots, v_n \} \setminus \{v_i\}.
    \end{equation*}
    Then $C\, X:=\{C_n\}_{n \geq 0}$ is a chain complex.
\end{proposition}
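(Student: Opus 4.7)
The plan is to verify directly that ${\mathcal B}_{n-1} \circ {\mathcal B}_n = 0$ on an arbitrary basis element $\sigma = \{v_0, v_1, \ldots, v_n\} \in X_n$, since by linearity this suffices to establish the identity as an equality of linear maps $C_n \to C_{n-2}$. Fix an ordering $v_0, v_1, \ldots, v_n$ of the vertices of $\sigma$ (the "orientation"); since the defining formula depends on such an ordering, every computation that follows is relative to this choice.

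First I would expand
\[
{\mathcal B}_n(\sigma) \;=\; \sum_{i=0}^{n} (-1)^i \, \sigma \setminus \{v_i\},
\]
and then apply ${\mathcal B}_{n-1}$ termwise. The key care point is that $\sigma \setminus \{v_i\}$ is an $(n-1)$-simplex whose vertex list (in the inherited ordering) is $v_0, \ldots, v_{i-1}, v_{i+1}, \ldots, v_n$, so the position index of $v_j$ inside $\sigma \setminus \{v_i\}$ is $j$ when $j<i$ but shifts down to $j-1$ when $j>i$. This index shift is precisely what makes the double sum collapse, so the bookkeeping must be done carefully. Substituting gives
\[
{\mathcal B}_{n-1}\!\bigl(\sigma\setminus\{v_i\}\bigr) \;=\; \sum_{j<i}(-1)^j\,\sigma\setminus\{v_i,v_j\} \;+\; \sum_{j>i}(-1)^{j-1}\,\sigma\setminus\{v_i,v_j\}.
\]

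Next I would collect, for each unordered pair $\{v_j,v_k\}$ with $j<k$, the coefficient of $\sigma\setminus\{v_j,v_k\}$ in the double sum ${\mathcal B}_{n-1}({\mathcal B}_n(\sigma))$. This face appears exactly twice: once from the $i=k$ term of the outer sum (contributing $(-1)^k\cdot(-1)^j = (-1)^{j+k}$, since $j<i$) and once from the $i=j$ term (contributing $(-1)^j\cdot(-1)^{k-1} = (-1)^{j+k-1}$, since $k>i$). These two contributions are negatives of each other and cancel, so every coefficient in ${\mathcal B}_{n-1}\circ{\mathcal B}_n(\sigma)$ vanishes. By linearity the composition is the zero map, which is the defining condition of a chain complex.

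The main obstacle is purely notational: keeping the two enumerations of the faces of $\sigma\setminus\{v_i\}$ straight so that the shift $j \mapsto j-1$ (for $j>i$) is correctly incorporated into the sign. Once the two cases $i<j$ vs.\ $i>j$ are separated, the cancellation is essentially symbolic and requires no deeper property of $X$; it is a statement about the combinatorics of removing two elements from an ordered set in the two possible orders. I would also remark briefly that the formula passes to a well-defined map on the span of the \emph{oriented} simplices (a change of orientation of $\sigma$ flips the sign of ${\mathcal B}_n(\sigma)$ by a standard permutation-sign argument), which is why the definition only requires an arbitrary choice of orientation per simplex and why readers are referred to \cite{hatcher} for the complete verification.
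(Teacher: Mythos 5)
Your computation is correct: the index-shift bookkeeping ($(-1)^j$ for $j<i$ versus $(-1)^{j-1}$ for $j>i$) and the resulting pairwise cancellation of the coefficient of each $\sigma\setminus\{v_j,v_k\}$ is exactly the standard verification that ${\mathcal B}_{n-1}\circ{\mathcal B}_n=0$. The paper itself omits this proof and defers to \cite{hatcher}, where the argument given is essentially the one you wrote, so there is nothing to reconcile.
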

\noindent \changedSB{The definition of the boundary maps in the above proposition assigns a \emph{default} orientation to the simplices based on a chosen ordering of the vertices in $V$.}

Having defined the algebraic structure on the simplicial complex based on the boundary information, the holes can now be measured by how much the boundary information between adjacent dimensions fails to match up: Let $G$ be the graph defined on the vertex set $V=\{v_0,\cdots, v_3\}$ as shown in the figure \ref{fig: a graph}. Then we intuitively see that $G$ has three 1-dimensional holes, that is, the holes made up by 1-simplices. If we take the formal sum of edges creating a hole preserving the orientation, we can write down these three holes as 1-chains:
\begin{align*}
    c_1 = & \{v_0, v_1\} - \{v_0, v_2\} + \{v_1, v_2\}, \\
    c_2 = & \{v_0, v_1\} - \{v_0, v_3\} + \{v_1, v_3\}, \\
    c_3 = & \{v_0, v_2\} - \{v_0, v_3\} + \{v_2, v_3\}.
\end{align*}
The important properties of these chains are that they will vanish when we take a boundary map; for example, 
\begin{align*}
    {\mathcal B}_1 c_1 = {\mathcal B}_1( \{v_1, v_2\} - \{v_1, v_3\} + \{v_2, v_3\}) = ( \{v_1\} - \{v_2\}) - (\{v_1\}-\{v_3\}) + (\{v_2\}-\{v_3\}) = 0
\end{align*}
This property thus motivates us to define the set of 1-dimensional holes of $G$ to be the nullspace of ${\mathcal B}_1$, denoted $\Ker {\mathcal B}$. This may seem sufficient for the definition of the holes on a graph, but it is in fact not sufficient for the higher dimensional simplicial complex. Let $H$ be the simplicial complex shown in the figure \ref{fig: a simplicial complex}, which is just $G$ but with a 2-simplex attached on it. Now one of the three hole that were present in $G$, namely $c1$, is no longer a hole since it is now covered by a 2-simplex. To realize this we compute the image of the 2-simplex $f$ under ${\mathcal B}$
\begin{align*}
    {\mathcal B}_2 f = \{v_1, v_2\} - \{v_1, v_3\} + \{v_2, v_3\},
\end{align*}
and notice that this equals to $c_1$. Hence, in order to take the higher dimensional simplices covering up the holes into account, we "forget" all of the images of ${\mathcal B}_2$ from $\Ker {\mathcal B}_1$, and this is exactly the the definition of the first homology, and we can more generally define:
\begin{definition}[Homology with Real Coefficients]\label{homology}
    Let $\{C_n\}$ be a chain complex. Then $n$-th homology $H_n\,C$ with real coefficients are the sequence of vector spaces defined by
    \begin{equation*}
        H_n\,C = \Ker {\mathcal B}_n / \Img {\mathcal B}_{n+1}. 
    \end{equation*}
\end{definition}
The intuition remains the same for the higher dimensional homology. The vectors in $\Ker {\mathcal B}_n$ represent the holes made up by $n$-simplices, and the vectors in $\Img {\mathcal B}_{n+1}$ are those holes that are filled up by the $(n+1)$-simplices, which we "forget" by taking quotient. The resulting vector space $H_n\,C$ thus contains the actual $n$-dimensional holes in $C$.

\subsection{Combinatorial Laplacian}
We now turn to an overview of what is called combinatorial Laplacian. Let ${\mathcal B}_{\changedSB{n}}^\ast: C_{n-1}\to C_n$ be the \textit{adjoint} of ${\mathcal B}$ \changedSB{with respect to an inner product $\langle\cdot,\cdot\rangle$} (i.e., a linear map that satisfies
   $ \langle u, {\mathcal B}_n v \rangle = \langle {\mathcal B}^\ast_n u, v \rangle $
for each $u\in C_{n-1}$ and $v\in C_n$). Then the \textit{$n$-th combinatorial Laplacian} $\mathcal L_n: C_n \to C_n$ is defined by
\begin{align*}
    \mathcal L_n 
    = {\mathcal B}_{\changedSB{n}}^\changedSB{\ast} {\mathcal B}_{\changedSB{n}} + {\mathcal B}_{\changedSB{n+1}} {\mathcal B}_{\changedSB{n+1}}^\ast. 
\end{align*}
It is often convenient to define
\begin{align*}
    \mathcal L_n^{\textit{down}} 
    = {\mathcal B}_n^\ast {\mathcal B}_n \quad \text{ and } \quad \mathcal L_n^{\textit{up}} 
    = {\mathcal B}_{n+1} {\mathcal B}_{n+1}^\ast.
\end{align*}
Note that we have $\mathcal L_n = \mathcal L_n^{\textit{down}} + \mathcal L_n^{\textit{up}}$ by definition. The

\changedSB{Using the simplices as basis and the standard Euclidean norm on the vector spaces, the matrix representation of the combinatorial Laplacian is given by
\[L_n = B_n^{T} B_n + B_{n+1} B_{n+1}^{T}\]
The graph Laplacian matrix described earlier is then $L_0$ (assuming ${\mathcal B}_0 = 0$).
}


It is clear that $\mathcal L_{\changedSB{n}}$ is positive semi-definite, and in particular all of their eigenvalues are real and non-negative. Moreover, the self-adjointness used in the construction, as well as the fact that ${\mathcal B}_n$ is a boundary map, carries out several important properties of the Combinatorial Laplacian.
\begin{proposition}\label{fundamentals of laplacian}
    For each $n \geq 0$,
    \begin{enumerate}[label=(\alph*),leftmargin=2em]
        \item $\Ker \mathcal L^{\textit{down}}_n = \Ker {\mathcal B}_n$ and $\Ker \mathcal L^{\textit{up}}_n = \Ker {\mathcal B}_{n+1}^\ast$,
        \item $\Ker \mathcal L_n = \Ker (\mathcal L^{\textit{down}}_n + \mathcal L^{\textit{up}}_n ) = \Ker \mathcal L^{\textit{down}}_n \cap \Ker \mathcal L^{\textit{up}}_n$, and
        \item $\Img \mathcal L_n^{\textit{up}} \subseteq \Ker \mathcal L_n^{\textit{down}}$ and $\Img \mathcal L_n^{\textit{down}} \subseteq \Ker \mathcal L_n^{\textit{up}}$.
    \end{enumerate}
\end{proposition}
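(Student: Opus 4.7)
The plan is to prove parts (a), (b), (c) in order, since each subsequent part leans on the previous. Throughout, the two key facts I will repeatedly use are: (i) the adjoint identity $\langle \mathcal{O} u, v\rangle = \langle u, \mathcal{O}^\ast v\rangle$ (equivalently, $\|\mathcal{O} u\|^2 = \langle \mathcal{O}^\ast \mathcal{O} u, u\rangle$), and (ii) the chain-complex condition ${\mathcal B}_n {\mathcal B}_{n+1} = 0$, which by taking adjoints also gives ${\mathcal B}_{n+1}^\ast {\mathcal B}_n^\ast = 0$.

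For part (a), the inclusion $\Ker {\mathcal B}_n \subseteq \Ker \mathcal L_n^{\textit{down}}$ is immediate from the definition $\mathcal L_n^{\textit{down}} = {\mathcal B}_n^\ast {\mathcal B}_n$. For the reverse inclusion, I would take $v \in \Ker \mathcal L_n^{\textit{down}}$ and compute $0 = \langle {\mathcal B}_n^\ast {\mathcal B}_n v, v\rangle = \langle {\mathcal B}_n v, {\mathcal B}_n v\rangle = \|{\mathcal B}_n v\|^2$, forcing ${\mathcal B}_n v = 0$. The statement for $\mathcal L_n^{\textit{up}} = {\mathcal B}_{n+1} {\mathcal B}_{n+1}^\ast$ is proved by the same argument with roles swapped, yielding $\|{\mathcal B}_{n+1}^\ast v\|^2 = 0$.

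For part (b), the first equality is just the definition $\mathcal L_n = \mathcal L_n^{\textit{down}} + \mathcal L_n^{\textit{up}}$. The nontrivial content is $\Ker(\mathcal L_n^{\textit{down}} + \mathcal L_n^{\textit{up}}) = \Ker \mathcal L_n^{\textit{down}} \cap \Ker \mathcal L_n^{\textit{up}}$. The $\supseteq$ direction is trivial. The $\subseteq$ direction is the one subtle point: given $(\mathcal L_n^{\textit{down}} + \mathcal L_n^{\textit{up}}) v = 0$, I would pair with $v$ to get $\langle \mathcal L_n^{\textit{down}} v, v\rangle + \langle \mathcal L_n^{\textit{up}} v, v\rangle = 0$. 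Both summands are non-negative because both operators are positive semi-definite (a fact noted right before the proposition statement, and also a consequence of the computation in (a): each summand equals the square of a norm). Hence each summand vanishes individually, and applying (a) gives $v \in \Ker \mathcal L_n^{\textit{down}} \cap \Ker \mathcal L_n^{\textit{up}}$.

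For part (c), I would argue via composition. Using ${\mathcal B}_n {\mathcal B}_{n+1} = 0$ directly,
\[
\mathcal L_n^{\textit{down}} \circ \mathcal L_n^{\textit{up}} = {\mathcal B}_n^\ast ({\mathcal B}_n {\mathcal B}_{n+1}) {\mathcal B}_{n+1}^\ast = 0,
\]
so for every $v \in C_n$, the element $\mathcal L_n^{\textit{up}} v$ lies in $\Ker \mathcal L_n^{\textit{down}}$, giving $\Img \mathcal L_n^{\textit{up}} \subseteq \Ker \mathcal L_n^{\textit{down}}$. The reverse inclusion follows symmetrically by taking the adjoint identity ${\mathcal B}_{n+1}^\ast {\mathcal B}_n^\ast = 0$ and computing $\mathcal L_n^{\textit{up}} \circ \mathcal L_n^{\textit{down}} = {\mathcal B}_{n+1} ({\mathcal B}_{n+1}^\ast {\mathcal B}_n^\ast) {\mathcal B}_n = 0$.

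None of this is particularly deep; the only place that requires a moment's care is the $\subseteq$ direction of (b), where one has to realize that positivity of each summand prevents cancellation. Everything else reduces mechanically to the identity ${\mathcal B}_n {\mathcal B}_{n+1} = 0$ and the standard adjoint trick $\|\mathcal{O} v\|^2 = \langle \mathcal{O}^\ast \mathcal{O} v, v\rangle$.
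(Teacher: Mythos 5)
Your proof is correct and follows essentially the same route as the paper's: the adjoint trick $\langle \mathcal{O}^\ast\mathcal{O}v,v\rangle = \|\mathcal{O}v\|^2$ for (a), non-negativity of the two summands for (b), and the identities ${\mathcal B}_n{\mathcal B}_{n+1}=0$ and ${\mathcal B}_{n+1}^\ast{\mathcal B}_n^\ast=0$ for (c). The only difference is that you spell out the composition argument in (c), which the paper leaves as a one-line remark.
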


\vspace{0.5em}\noindent
In particular, if $\lambda$ is an eigenvalue of $\mathcal L_n$ it is either an eigenvalue of $\mathcal L_n^{\textit{down}}$ or $\mathcal L_n^{\textit{up}}$, and $\lambda=0$ if and only if it is an eigenvalue of both $\mathcal L_n^{\textit{down}}$ and $\mathcal L_n^{\textit{up}}$. This fact can rephrased by using the homology vector spaces:
\begin{theorem}[Discrete Hodge Theorem~\changedSBa{\citep{arnold2010finite,derenick}}] \label{Discrete Hodge Theorem}
\begin{equation}
    \Ker \mathcal L_n \cong H_n C.
\end{equation}
\end{theorem}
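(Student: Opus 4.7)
The plan is to combine Proposition~\ref{fundamentals of laplacian} with a standard orthogonal-complement identity for the adjoint to realize $\Ker \mathcal{L}_n$ explicitly as a vector-space complement of $\Img {\mathcal B}_{n+1}$ inside $\Ker {\mathcal B}_n$, and then appeal to the first isomorphism theorem to conclude it is isomorphic to the quotient $H_n C = \Ker {\mathcal B}_n / \Img {\mathcal B}_{n+1}$.

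First I would apply part (b) of Proposition~\ref{fundamentals of laplacian} together with part (a) to obtain
\begin{equation*}
\Ker \mathcal{L}_n = \Ker \mathcal{L}_n^{\textit{down}} \cap \Ker \mathcal{L}_n^{\textit{up}} = \Ker {\mathcal B}_n \cap \Ker {\mathcal B}_{n+1}^\ast.
\end{equation*}
Next, I would invoke the elementary fact (true for any linear map between finite-dimensional inner product spaces) that
\begin{equation*}
\Ker {\mathcal B}_{n+1}^\ast = (\Img {\mathcal B}_{n+1})^\perp,
\end{equation*}
which follows directly from the definition of adjoint: $u \in \Ker {\mathcal B}_{n+1}^\ast$ iff $\langle {\mathcal B}_{n+1}^\ast u, v\rangle = 0$ for all $v$, iff $\langle u, {\mathcal B}_{n+1} v\rangle = 0$ for all $v$. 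Substituting this in,
\begin{equation*}
\Ker \mathcal{L}_n = \Ker {\mathcal B}_n \cap (\Img {\mathcal B}_{n+1})^\perp.
\end{equation*}

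Now, because ${\mathcal B}_{n} {\mathcal B}_{n+1} = 0$, we have the containment $\Img {\mathcal B}_{n+1} \subseteq \Ker {\mathcal B}_n$. Restricting the ambient inner product to $\Ker {\mathcal B}_n$, the orthogonal complement of $\Img {\mathcal B}_{n+1}$ within $\Ker {\mathcal B}_n$ is exactly $\Ker {\mathcal B}_n \cap (\Img {\mathcal B}_{n+1})^\perp$. Thus I get an orthogonal direct sum decomposition
\begin{equation*}
\Ker {\mathcal B}_n = \Img {\mathcal B}_{n+1} \oplus \Ker \mathcal{L}_n.
\end{equation*}
The quotient projection $\Ker {\mathcal B}_n \twoheadrightarrow \Ker {\mathcal B}_n / \Img {\mathcal B}_{n+1} = H_n C$ therefore restricts to a linear map on $\Ker \mathcal{L}_n$ that is injective (its kernel is $\Ker \mathcal{L}_n \cap \Img {\mathcal B}_{n+1} = 0$, by orthogonality) and surjective (every class has a representative in the complement $\Ker \mathcal{L}_n$), giving the desired isomorphism $\Ker \mathcal{L}_n \cong H_n C$.

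There is no real obstacle here; the argument is essentially bookkeeping once Proposition~\ref{fundamentals of laplacian} is in hand. The one subtlety worth stating carefully is that the isomorphism depends on the choice of inner product on $C_n$ (since that is what determines the adjoint ${\mathcal B}_{n+1}^\ast$ and hence which complement of $\Img {\mathcal B}_{n+1}$ we single out), whereas $H_n C$ is defined purely algebraically; different inner products give different concrete subspaces $\Ker \mathcal{L}_n$, all canonically isomorphic to $H_n C$. Everything is finite-dimensional so no analytic considerations arise.
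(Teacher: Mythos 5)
Your argument is correct and follows exactly the same chain of identifications as the paper's proof: $\Ker\mathcal{L}_n = \Ker\mathcal{L}_n^{\textit{down}}\cap\Ker\mathcal{L}_n^{\textit{up}} = \Ker{\mathcal B}_n\cap\Ker{\mathcal B}_{n+1}^\ast = \Ker{\mathcal B}_n\cap(\Img{\mathcal B}_{n+1})^\perp \cong H_nC$. The only difference is that you spell out the final isomorphism (the orthogonal complement inside $\Ker{\mathcal B}_n$ mapping isomorphically onto the quotient), which the paper leaves implicit; this is a welcome addition, not a deviation.
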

The dimension of the kernel of the combinatorial Laplacian $\mathcal L_n$ thus counts the number of holes in the complex $C$. 

\section{Weighted Laplacian}\label{sec:weighted-laplacian}
In this section we \changedSB{provide the main theoretical contribution of this paper.

\vspace{0.5em}\noindent
\textbf{Summary of the Technical Contribution of this Section:}
In summary, we assign positive weights to each simplex in the complex and define the $n$-th \changedSBa{diagonal} weight matrix, $W_n \in \mathbb{R}^{|X_n|\times |X_n|}$, such that the $(i,i)^\text{th}$ diagonal element is the weight on the $i^\text{th}$ $n$-simplex. Using the basis of the simplices, the definition of the $n$-th weighted boundary matrix is
\[ \tilde{B}_n = W_{n-1}^{-1} B_{n} W_n\]
which leads to the definition of the $n$-th weighted combinatorial Laplacian matrix,
\begin{eqnarray*}
\tilde{L}_n 
& = & \tilde{B}_n^T \tilde{B}_n + \tilde{B}_{n+1} \tilde{B}_{n+1}^T \\
& = &  W_n B_n^T (W_{n-1}^{-1})^2 B_n W_n + W_n^{-1} B_{n+1} W_{n+1}^2 B_{n+1}^T W_n^{-1}.
\end{eqnarray*}
%

\noindent
Furthermore, we assume that the weights are chosen such that the weight on a face of a simplex is always greater than the weight on the simplex itself (referred to as the \emph{filtration condition}).
With this restriction, it can be seen that in the definition of the weighted boundary matrix, even if we let the weight on a $(n-1)$-simplex to get infinitesimally small, since the weight on any $n$-simplex whose face it constitutes is even smaller, the weighted boundary matrix remains finite.

With the filtration condition, the main theoretical result of this section can be summarized as follows:
\begin{quote}
\normalsize
    \emph{In Theorem~\ref{main theorem} we show that in a weighted simplicial complex, even if there does not exist a hole (and hence the kernel of the weighted Laplacian is trivial), a subcomplex with low weights (corresponding to an ``\emph{almost hole}'') results in a \emph{small} eigenvalue of the Laplacian with an upper bound proportional to the low weight on the subcomplex.
    This result is further generalized to a complex with multiple \emph{almost-holes} in Proposition~\ref{extended main theorem}.}

\end{quote}
\changedSBa{The proof of all the theoretical results (lemmas, propositions and theorems) presented in this section are placed in Appendix~\ref{appendix:proofs} for better readability.}


} 


\subsection{The Definition and the Basic Properties}
\begin{definition}[Weights on Simplicial Complex]
    Let $X$ be a simplicial complex. A function $w_n: X_n\to \R_{>0}$ is called a \textbf{weight function}, and $w_n$ can always be linearly extended to the linear map $C_n(X) \to C_n(X)$, which we will denote by $\mathcal W_n$. The tuple $\Tilde X := (X, \;w)$ is called a \textbf{weighted simplicial complex}. The subscript $n$ of $w_n$ and $\mathcal W_n$ is often abbreviated when there is no confusion.
\end{definition}
Note that, when written out as a matrix, $\mathcal W_n$ is the diagonal matrix with the values of $w_n$ on its diagonal.


\begin{definition}[Weighted Chain Complex and Weighted Laplacian]
    Let $\Tilde X = (X, w)$ be a weighted simplicial complex. Define the sequence of vector spaces $\Tilde C(\Tilde X) := C(X)$. We define our \textbf{weighted chain complex} structure of $\Tilde C(\Tilde X)$ by providing the \textbf{weighted boundary maps} $\Tilde {\mathcal B}_n$ given by
    \begin{align*}
        \Tilde{{\mathcal B}}_n = \mathcal W_{n-1}^{-1} {\mathcal B}_n \mathcal W_n
    \end{align*}
    and the \textbf{weighted combinatorial Laplacian}
    \begin{align*}
        \Tilde{\mathcal L_n} = \Tilde {\mathcal B}_{n}^{\ast} \Tilde {\mathcal B}_n + \Tilde {\mathcal B}_{n+1} \Tilde {\mathcal B}_{n+1}^{\ast}.
    \end{align*}
\end{definition}
It is straightforward to verify that the maps $\Tilde {\mathcal B}_n$ are in fact boundary maps. Hence we can systematically define the homology vector spaces for weighted chain complexes as well, which we will denote by $H_n \, \Tilde X$. From the definition, it is clear that the diagram
\begin{center}
\begin{tikzcd}
    \cdots \arrow{r} & \Tilde C_{n+1} \arrow{r}{\Tilde {\mathcal B}_{n+1}} \arrow{d}{W_{n+1}} & \Tilde C_n \arrow{r}{\Tilde {\mathcal B}_n} \arrow{d}{W_{n}} & \Tilde C_{n-1} \arrow{r} \arrow{d}{W_{n-1}} & \cdots \\
    \cdots \arrow{r} & C_{n+1} \arrow{r}{{\mathcal B}_{n+1}} & C_n \arrow{r}{{\mathcal B}_n} & C_{n-1} \arrow{r} & \cdots.
\end{tikzcd}
\end{center}
commutes, that is, ${\mathcal B}_{n} \mathcal W_{n} = \mathcal W_{n-1} \Tilde {\mathcal B}_{n}$ for each $n\geq 0$. Since $w_n$ is positive, each $\mathcal W_n$ is an isomorphism, thus we obtain
\begin{equation}\label{eq:weighted_h_isom}
    H_n \, \Tilde X \cong H_n\, X.
\end{equation}
Furthermore, recall that the results in the previous section about the combinatorial Laplacian did not depend on anything beyond the self-adjointness and the fact that ${\mathcal B}_n$ is a boundary map. Hence, Theorem \ref{Discrete Hodge Theorem} applies to our weighted homology and weighted Laplacian as well, which in turn implies
\begin{equation}\label{eq:weighted_l_ker_isom}
    \Ker(\Tilde{ \mathcal L_n}) \cong \Ker( \mathcal L_n),
\end{equation}
where $\Tilde{ \mathcal L_n}$ is the weighted combinatorial Laplacian defined over $\Tilde{X} = (X, w_n)$ and $\mathcal L_n$ is the combinatorial Laplacian defined over $X$.

As with the case of combinatorial Laplacian, we can write our weighted combinatorial Laplacian explicitly; if $v = \alpha_1 c_1 + \cdots + \alpha_{|X_n|} c_{|X_n|} \in  \Tilde{C}_n\, X$, then
\begin{equation}
    \langle \Tilde{\mathcal L}v, c_i \rangle  = \sum_{f\in X_{n+1}: \;c_i \subseteq f} \sigma(c_i, f) \sum_{c_j \in X_n:\; c_j \subseteq f} \sigma(c_j, f) \frac{w(f)^2}{w(c_i)w(c_j)} \alpha_j,
\end{equation}
where $\sigma(c, f)$ is the sign function determined by the sign of $c$ in the sum ${\mathcal B} f$, that is,
\begin{align*}
    \sigma(c, f) = \langle c, {\mathcal B} f \rangle.
\end{align*}



\subsection{Norm}
We first consider the norm of our weighted Laplacian matrix. If we choose the weight function $\{w_n\}_{n\geq 0}$ completely independently, then $\| \Tilde{ \mathcal L}_n \|$ is unbounded; for example, let $X$ be a simple graph with one edge and, choose the constant weight functions $w_0=1$ and $w_1=m$ for some positive number $m$. Then one can compute that $\| \Tilde{\mathcal L}_0 \| = 2m$, which clearly diverges as $m\to \infty$. Fortunately, there is a natural local condition on weights which gives us an upper bound of the 2-norm of the weighted Laplacian that is independent of the choice of weights:

\begin{definition}
    We say that the weight functions $w:X\to \R_{\geq 0}$ satisfies the \textbf{filtration condition} if for each $n \geq 1$ and for each simplex $b\in X_{n-1}$ and $c\in X_n$ with $b \subseteq c$, we have
    \begin{align*}
        w_n(c) \leq w_{n-1}(b).
    \end{align*}
\end{definition}

\begin{proposition}\label{Norm}
    If the weights $\{w_n\}$ of a weighted simplicial complex $\Tilde{X_n}$ satisfy the filtration condition, then
    \begin{align*}
        \| \Tilde{ \mathcal L }_n \| \leq (n+2)|X_n|.
    \end{align*}
\end{proposition}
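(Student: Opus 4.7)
My plan is to reduce the bound on $\|\tilde{\mathcal{L}}_n\|$ to separately bounding the two summands $\tilde{\mathcal{L}}_n^{\mathrm{up}}$ and $\tilde{\mathcal{L}}_n^{\mathrm{down}}$, and then to control each via the maximum row and column absolute sums of the associated weighted boundary matrices.

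For the reduction step I would invoke Proposition~\ref{fundamentals of laplacian}(c), which applies verbatim to the weighted setting since its proof rested only on self-adjointness and the relation $\tilde{\mathcal{B}}_{n} \tilde{\mathcal{B}}_{n+1} = 0$. This yields $\Img \tilde{\mathcal{L}}_n^{\mathrm{up}} \subseteq \Ker \tilde{\mathcal{L}}_n^{\mathrm{down}}$ and conversely, i.e., $\tilde{\mathcal{L}}_n^{\mathrm{up}} \tilde{\mathcal{L}}_n^{\mathrm{down}} = \tilde{\mathcal{L}}_n^{\mathrm{down}} \tilde{\mathcal{L}}_n^{\mathrm{up}} = 0$. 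The two symmetric positive semi-definite operators therefore commute and share an orthonormal eigenbasis $\{v_i\}$ with non-negative eigenvalues $\alpha_i,\beta_i$ satisfying $\alpha_i \beta_i = 0$. Consequently $\tilde{\mathcal{L}}_n v_i = (\alpha_i + \beta_i) v_i = \max(\alpha_i,\beta_i) v_i$, so that
\[
\|\tilde{\mathcal{L}}_n\| \;=\; \max\bigl(\|\tilde{\mathcal{L}}_n^{\mathrm{up}}\|,\;\|\tilde{\mathcal{L}}_n^{\mathrm{down}}\|\bigr) \;=\; \max\bigl(\|\tilde{B}_{n+1}\|^2,\;\|\tilde{B}_n\|^2\bigr).
\]

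Next I would apply the elementary inequality $\|A\|_2 \leq \sqrt{\|A\|_1 \, \|A\|_\infty}$ (the max column absolute sum times the max row absolute sum, provable in one line via Cauchy--Schwarz) to each weighted boundary matrix. The matrix $\tilde{B}_{n+1}$ has rows indexed by $X_n$, columns by $X_{n+1}$, and $(c,f)$-entry of magnitude $w(f)/w(c)$ whenever $c \subset f$; by the filtration condition, every such entry is at most $1$. Summing over the $n+2$ codimension-one faces of each $f$ gives $\|\tilde{B}_{n+1}\|_1 \leq n+2$, while summing over $(n+1)$-simplices containing a fixed $c$ gives $\|\tilde{B}_{n+1}\|_\infty \leq d_c^{\mathrm{up}}$, where $d_c^{\mathrm{up}} := |\{f \in X_{n+1} : c \subset f\}|$.

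The key combinatorial step is to show $d_c^{\mathrm{up}} \leq |X_n|$. For any fixed $v \in c$, consider the map $\phi\tightcolon \{f \in X_{n+1} : c \subset f\} \to X_n$ defined by $\phi(f) = f \setminus \{v\}$. This is well-defined into $X_n$ by closure under inclusion, and injective because every $f$ in the domain contains $v$ (so $f = \phi(f) \cup \{v\}$ recovers $f$ from its image). Hence $\|\tilde{B}_{n+1}\|_2^2 \leq (n+2)|X_n|$. An entirely analogous argument applied to $\tilde{B}_n$ (with the trivial bound $|\{c \in X_n : b \subset c\}| \leq |X_n|$ in place of the injection) yields $\|\tilde{B}_n\|_2^2 \leq (n+1)|X_n|$. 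Combining these through the maximum from the first step delivers $\|\tilde{\mathcal{L}}_n\| \leq (n+2)|X_n|$. I expect the main subtlety to be the injection $\phi$ controlling $d_c^{\mathrm{up}}$: the naive bound $d_c^{\mathrm{up}} \leq |X_{n+1}|$ is not strong enough, and without the reduction via Proposition~\ref{fundamentals of laplacian}(c) a direct row-sum bound on $\tilde{\mathcal{L}}_n$ inflates the estimate by a factor of roughly two. Once these two pieces are in place, the rest is a careful application of the filtration inequality inside each row/column sum.
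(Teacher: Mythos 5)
Your proof is correct, and although it opens with the same reduction as the paper --- invoking Proposition~\ref{fundamentals of laplacian}(c) to get $\|\tilde{\mathcal L}_n\| = \max\bigl\{\|\tilde{\mathcal L}_n^{\mathrm{down}}\|,\,\|\tilde{\mathcal L}_n^{\mathrm{up}}\|\bigr\} = \max\bigl\{\|\tilde{\mathcal B}_n\|^2,\,\|\tilde{\mathcal B}_{n+1}\|^2\bigr\}$ --- the second half takes a genuinely different route. The paper passes to Frobenius norms and counts incidence pairs, which yields $\|\tilde{\mathcal B}_n\|_F^2 \leq (n+1)|X_n|$ for the down part but, applied to $\tilde{\mathcal B}_{n+1}$, yields $(n+2)|X_{n+1}|$ for the up part; that matches the stated bound $(n+2)|X_n|$ only when $|X_{n+1}| \leq |X_n|$, which can fail (for a complete graph on $m \geq 4$ vertices at $n=0$ one has $|X_1| = \binom{m}{2} > m = |X_0|$, and the Frobenius estimate $m(m-1)$ exceeds the claimed $2m$, even though the actual top eigenvalue $m$ does not). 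Your substitute --- the Schur-type inequality $\|A\|_2^2 \leq \|A\|_1\|A\|_\infty$ combined with the injection $f \mapsto f \setminus \{v\}$ showing that a fixed $n$-simplex is a face of at most $|X_n|$ many $(n+1)$-simplices --- lands exactly on $(n+2)|X_n|$ and so actually closes this gap. The supporting details all check out: the filtration condition makes every entry of $\tilde{\mathcal B}_{n+1}$ and $\tilde{\mathcal B}_n$ at most $1$ in magnitude, the column sums are $n+2$ and $n+1$ respectively, and the row sums are controlled by the coface counts. In short, your argument is not merely an alternative to the paper's; it is the sharper estimate needed to prove the proposition as stated, and it would be worth flagging to the authors.
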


The filtration condition is not only useful for bounding the norm from above, but it also allows us bound the small eigenvalues from above, as we shall see.

\subsection{Small Eigenvalues}
We now investigate the most important property of our weighted Laplacian; the behavior of the small eigenvalues and their implication. Our approach is to decompose a weighted simplicial complex into a union of multiple smaller complexes with larger weights and smaller weights, and observe that the small eigenvalues of the original weighted complex tends to emerge around the cells with smaller weights. The main results are stated in Theorem \ref{main theorem} and its direct generalization Proposition \ref{extended main theorem}.

\changedSB{In the following discussion}, by the intersection $\Tilde X \cap \Tilde Y$ (or the union $\Tilde X \cup \Tilde Y$, respectively) of weighted simplicial complex $\Tilde X=(X,\;w_X)$ and $\Tilde Y=(Y,\;w_Y)$, we mean the tuple $(X \cap Y,\;w_{X \cap Y})$ (or $(X \cup Y,\;w_{X \cup Y})$, respectively) where 
\begin{align*}
    w_{X \cap Y}(c) &= w_{X}(c), \text{ and } \\
    w_{X \cup Y}(c) &= \begin{cases}
        w_{X}(c) & \text{ if $c\in X$} \\
        w_{Y}(c) & \text{ if $c\in Y$}
    \end{cases} 
\end{align*}
Hence, for the union and the intersection of weighted simplicial complexes to be well-defined, whenever we write $\Tilde X \cap \Tilde Y$ or $\Tilde X \cup \Tilde Y$, it is understood that $X$ and $Y$ are defined on the same vertex set and their weight functions satisfy $w_X(c) = w_Y(c)$ for each $c \in X \cap Y$. 

Let us consider the following motivating example. Let $\Tilde X=(X, w_X)$ be a weighted simplicial complex illustrated in Figure \ref{fig:union_example_x}, and let us further assume that the weight function $w_X$ is constant, say $w_X=1$. Combinatorially, $X$ admits one 1-dimensional hole, and from the isomorphism (1), the weighted complex $\Tilde{X}$ as well admits a 1-dimensional hole. We now introduce another weighted simplicial complex $\Tilde Y=(Y,w_Y)$, as shown in Figure \ref{fig:union_example_y}, and suppose that each simplex of $\Tilde Y - \Tilde X$ has weight $\epsilon$, for some small $\epsilon>0$. By identifying certain boundaries, we can form a union $\Tilde X \cup \Tilde Y$, and fill the hole of $\Tilde X$ with a sheet $\Tilde Y$ with small weights. In fact, $\Tilde X \cup \Tilde Y$ has no combinatorially-defined 1-dimensional hole in it. However, because weights of $\Tilde Y$ are so small, we would like to show that the smallest eigenvalue of $\Tilde X \cup \Tilde Y$ is also small so that it almost belongs to the kernel of $\Tilde {\mathcal L}$, giving the notion of ``almost 1-dimensional hole'' mentioned in the introduction section. Furthermore, we would like to show the corresponding eigenvector also emerge around $\Tilde Y$, the cells of small weights. This behavior of weighted complex is formally stated in Theorem \ref{main theorem}, and our goal of this section is to prove it.

\begin{figure}
\centering
\begin{subfigure}[b]{0.3\textwidth}
    \includegraphics[width=\textwidth]{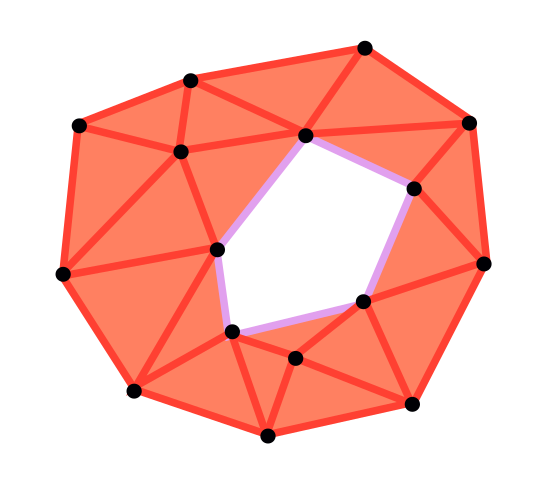} 
    \caption{$\Tilde X$}
    \label{fig:union_example_x}
\end{subfigure}
\hfill
\begin{subfigure}[b]{0.3\textwidth}
    \centering
    \includegraphics[width=0.5\textwidth]{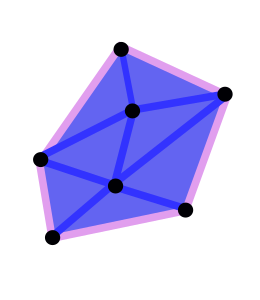}
    \caption{$\Tilde Y$}
    \label{fig:union_example_y}
\end{subfigure}
\hfill
\begin{subfigure}[b]{0.3\textwidth}
    \includegraphics[width=\textwidth]{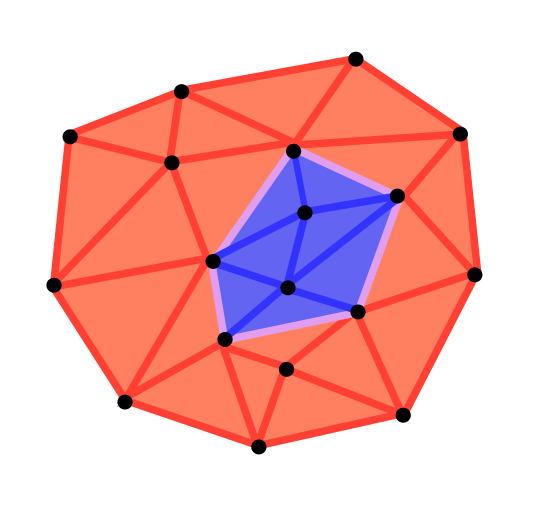}
    \caption{$\Tilde X \cup \Tilde Y$}
    \label{fig:union_example_x_union_y}
\end{subfigure}
\caption{Example of a union}
\label{fig:union_example}
\end{figure}

\begin{definition}
    We define the topological boundary operator $\partial_n X$ of a simplicial complex $X$ at dimension $n$ to be
    \begin{align*}
        \partial_n X = \{ a \in X_n : \text{There is exactly one $b\in X_{n+1}$ such that $a\cap b = a$} \}.
    \end{align*}
\end{definition}


We start with a basic property of the weighted Laplacian with respect to union: 
\begin{lemma}\label{norm of a union}
    If $\Tilde X$ and $\Tilde Y$ are weighted simplicial complex such that $\Tilde X \cap \Tilde Y = \partial_n \Tilde Y$ and if $v \in C(\Tilde X \cup \Tilde Y)$, then 
    \begin{align*}
        \| \mathcal L^{\textit{up}}_{\Tilde X \cup \Tilde Y} v\| \leq \| \mathcal L^{\textit{up}}_{\Tilde X} \pi_{\Tilde{X}}(v) + \mathcal L^{\textit{up}}_{\Tilde Y} \pi_{\Tilde Y} (v) \|.
    \end{align*}
    where $\pi_{\Tilde X}: C(\Tilde X \cup \Tilde Y) \to C \Tilde X$ and $\pi_{\Tilde Y}: C(\Tilde X \cup \Tilde Y) \to C \Tilde Y$ are the obvious projection maps.
\end{lemma}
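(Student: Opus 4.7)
The plan is to show that the stated inequality is in fact an identity:
\[ \mathcal L^{\textit{up}}_{\tilde X\cup\tilde Y}\, v \;=\; \mathcal L^{\textit{up}}_{\tilde X}\,\pi_{\tilde X}(v) \;+\; \mathcal L^{\textit{up}}_{\tilde Y}\,\pi_{\tilde Y}(v), \]
with the two summands on the right interpreted as elements of $C(\tilde X\cup\tilde Y)$ via the canonical coordinate inclusions; taking norms then yields the lemma (in fact with equality). The crucial structural input is the hypothesis $\tilde X \cap \tilde Y = \partial_n \tilde Y$: since $\partial_n \tilde Y$ consists only of $n$-simplices, no $(n+1)$-simplex lies in both $X$ and $Y$, so $(X \cup Y)_{n+1} = X_{n+1} \sqcup Y_{n+1}$, and under the Euclidean inner product the space $C_{n+1}(\tilde X \cup \tilde Y)$ splits orthogonally as $\iota^+_{\tilde X}(C_{n+1}(\tilde X)) \oplus \iota^+_{\tilde Y}(C_{n+1}(\tilde Y))$, where $\iota^+_{\tilde X}, \iota^+_{\tilde Y}$ denote the coordinate inclusions at dimension $n+1$.

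Because each complex is closed under inclusion, all $n$-faces of any $\sigma \in X_{n+1}$ lie in $X_n$, and by the compatibility of weights on the intersection the weights in $\tilde X \cup \tilde Y$ agree with those in $\tilde X$ on these faces. The explicit formula $\tilde{\mathcal B}_{n+1} = \mathcal W_n^{-1} \mathcal B_{n+1} \mathcal W_{n+1}$ then yields the commutation relation $\tilde{\mathcal B}^{\tilde X \cup \tilde Y}_{n+1} \circ \iota^+_{\tilde X} = \iota_{\tilde X} \circ \tilde{\mathcal B}^{\tilde X}_{n+1}$, and likewise for $\tilde Y$. Taking adjoints with respect to the simplex bases, and using that the adjoint of a coordinate inclusion is the corresponding coordinate projection, one obtains
\[ (\tilde{\mathcal B}^{\tilde X \cup \tilde Y}_{n+1})^{\ast} \;=\; \iota^+_{\tilde X}\,(\tilde{\mathcal B}^{\tilde X}_{n+1})^{\ast}\,\pi_{\tilde X} \;+\; \iota^+_{\tilde Y}\,(\tilde{\mathcal B}^{\tilde Y}_{n+1})^{\ast}\,\pi_{\tilde Y}, \]
which is well-defined thanks to the orthogonal splitting of $C_{n+1}$.

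Composing on the left with $\tilde{\mathcal B}^{\tilde X \cup \tilde Y}_{n+1}$ and reapplying the commutation relation yields the asserted identity, and the lemma follows immediately by taking norms. The main obstacle is really just bookkeeping: one must keep careful track of which ambient vector space each chain is sitting in and verify that, under the Euclidean inner product attached to the simplex basis, the coordinate inclusion and coordinate projection are genuine adjoints. Beyond that, it is a direct block-diagonal computation, and I expect that this sharper identity, rather than merely the stated inequality, will be what actually gets used downstream in the proof of Theorem~\ref{main theorem}.
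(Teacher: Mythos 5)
Your proposal is correct and takes essentially the same route as the paper: the paper also establishes the operator identity $\tilde{\mathcal L}^{\textit{up}}_{\tilde X\cup\tilde Y}\,v = \iota_{\tilde X}\tilde{\mathcal L}^{\textit{up}}_{\tilde X}\pi_{\tilde X}(v) + \iota_{\tilde Y}\tilde{\mathcal L}^{\textit{up}}_{\tilde Y}\pi_{\tilde Y}(v)$ (written as an inclusion--exclusion formula whose intersection term vanishes because $\tilde X\cap\tilde Y = \partial_n\tilde Y$ contains no $(n+1)$-simplices) and then concludes by taking norms, noting the inclusions are isometries. Your derivation via the orthogonal splitting of $C_{n+1}$ and adjointness of coordinate inclusions and projections simply fills in the details of the decomposition that the paper asserts without proof.
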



\begin{lemma}\label{inequality about a union}
    Let $X$ be a simplicial complex, and let $v$ be a chain in $C_n(\partial_n X) \subseteq C_n\, X$. Then 
    \begin{align*}
        \| \mathcal L^{\textit{up}} v \| \leq (n+1)\| v \|.
    \end{align*}
\end{lemma}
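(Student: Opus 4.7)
The plan is to exploit the defining property of $\partial_n X$---each $a \in \partial_n X$ is contained in a unique $(n+1)$-simplex, which I denote $f_a$---and compute $\mathcal{L}^{\textit{up}} v = {\mathcal B}_{n+1}{\mathcal B}_{n+1}^{\ast} v$ explicitly, bounding each factor by Cauchy--Schwarz.

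Writing $v = \sum_{a \in \partial_n X} \alpha_a a$, I would first compute
\[ {\mathcal B}_{n+1}^{\ast} v = \sum_{a \in \partial_n X} \alpha_a \sigma(a, f_a) f_a = \sum_{f \in X_{n+1}} \beta_f f, \]
where $\beta_f := \sum_{a \in \partial_n X,\, a \subset f} \alpha_a \sigma(a, f)$ is a sum of at most $n+2$ terms (one per $n$-face of $f$). Cauchy--Schwarz applied to each $\beta_f$, combined with the fact that each $a \in \partial_n X$ appears in exactly one $f$, gives the preliminary bound $\|{\mathcal B}_{n+1}^{\ast} v\|^2 = \sum_f \beta_f^2 \le (n+2)\,\|v\|^2$.

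Next, I would write $\mathcal{L}^{\textit{up}} v = \sum_f \beta_f\, {\mathcal B}_{n+1} f$ and read off the coefficient $\gamma_{a'} = \sum_{f \supset a'} \beta_f\, \sigma(a', f)$ of each $a' \in X_n$. For $a' \in \partial_n X$, only the unique $f_{a'}$ contributes, so $\gamma_{a'}^2 = \beta_{f_{a'}}^2$, and summing over such $a'$ produces $\sum_f k_f \beta_f^2$, where $k_f \le n+2$ counts the faces of $f$ that lie in $\partial_n X$; combined with the preliminary bound this contributes at most $(n+2)^2\,\|v\|^2$ to $\|\mathcal{L}^{\textit{up}} v\|^2$. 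For $a' \notin \partial_n X$, Cauchy--Schwarz on $\gamma_{a'}^2$ introduces the number of $(n+1)$-simplices containing $a'$; swapping the order of summation recovers a bound in terms of $\sum_f \beta_f^2$ again.

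The main obstacle is keeping the final constant as sharp as $n+1$. A straight application of Cauchy--Schwarz of the sort above naturally produces a constant of order $n+2$, so the proof likely requires a tighter argument---for instance, a Gram-matrix estimate on the vectors $\{{\mathcal B}_{n+1} f_a\}_{a \in \partial_n X}$ that exploits sign cancellations among faces of a common $(n+1)$-simplex, or a direct eigenvalue bound on the restriction of $\mathcal{L}^{\textit{up}}$ to $C_n(\partial_n X)$ using its block-diagonal structure indexed by the $f_a$'s. I expect this is where the bulk of the technical work in the proof lies.
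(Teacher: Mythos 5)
Your opening computation is sound: for $v$ supported on $\partial_n X$ you correctly get $\mathcal{B}_{n+1}^{\ast}v=\sum_a \alpha_a\sigma(a,f_a)f_a$ and the estimate $\sum_f\beta_f^2\le(n+2)\|v\|^2$. The genuine gap is in the second half, in the coefficients $\gamma_{a'}$ with $a'\notin\partial_n X$. There Cauchy--Schwarz gives $\gamma_{a'}^2\le d_{a'}\sum_{f\supset a'}\beta_f^2$, where $d_{a'}$ is the number of $(n+1)$-simplices containing $a'$, and after swapping the order of summation you are still left with a factor of $\max_{a'}d_{a'}$, which is a property of the complex, not of $n$. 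No amount of reshuffling fixes this, because those off-boundary coefficients really can be large: take $n=1$ and $k$ triangles $\{1,2,i+2\}$ all glued along the edge $\{1,2\}$, and let $v=\sum_i\bigl(\{2,i+2\}-\{1,i+2\}\bigr)$. Every edge in the support of $v$ has a unique coface, so $v\in C_1(\partial_1 X)$, yet $\mathcal{L}^{\textit{up}}v$ carries the coefficient $2k$ on $\{1,2\}$ and $\|\mathcal{L}^{\textit{up}}v\|/\|v\|=\sqrt{4+2k}\to\infty$. So the lemma as stated fails without a further hypothesis (e.g.\ that every $n$-face of every coface of a simplex of $\partial_n X$ again lies in $\partial_n X$, which is what actually holds for the glued piece $\tilde Y$ in Theorem~\ref{main theorem}). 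Your instinct that the constant $n+1$ is out of reach is also correct even in that good case: for a single $(n+1)$-simplex $f$, every $n$-face lies in $\partial_n X$, and $\mathcal{B}_{n+1}f$ is an eigenvector of $\mathcal{L}^{\textit{up}}_n$ with eigenvalue $n+2$.

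The paper's proof takes a different route, and it is less careful precisely where you are careful: it writes the coefficient of $\mathcal{L}^{\textit{up}}v$ at $c_i$ using a single coface $f_i$ per simplex and then restricts the outer sum to the indices in the support of $v$ --- i.e.\ it silently discards exactly the off-boundary coefficients that your bookkeeping refuses to ignore. On the retained coefficients it replaces all signs by absolute values, recognizes the result as $\hat v+\tau\hat v+\cdots+\tau^n\hat v$ for a permutation $\tau$ cycling the $n$-faces of each coface, and applies the triangle inequality; since an $(n+1)$-simplex has $n+2$ faces of dimension $n$, even that count should produce $n+2$ terms rather than $n+1$. So do not invest further effort in Gram-matrix or sign-cancellation arguments aimed at the constant $n+1$. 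The productive move is to add the hypothesis above and prove the bound $(n+2)\|v\|$, which your first two steps essentially already deliver: under that hypothesis every $a'$ in the support of $\mathcal{L}^{\textit{up}}v$ has a unique coface, so $\|\mathcal{L}^{\textit{up}}v\|^2=\sum_f k_f\beta_f^2\le(n+2)\sum_f\beta_f^2\le(n+2)^2\|v\|^2$, and this is sharp by the single-simplex example.
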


\begin{theorem}\label{main theorem}
    Let $\epsilon>0$, and let $\Tilde X = (X, w_X)$ and $\Tilde Y=(Y,w_Y)$ be weighted simplicial chain complexes that satisfy
    \begin{enumerate}[leftmargin=2em]
        \item \changedSY{$w_X$ and $w_Y$ satisfy the filtration condition,}
        \item $H_n \Tilde Y = 0$,
        \item $\dim H_n \left(\Tilde X \cup \Tilde Y \right) = \dim \left( H_n \Tilde X \right) - 1$,
        \item $\Tilde X \cap \Tilde Y = \partial_n \; \Tilde Y$, and 
        \item $w_X(a)/w_X(b) \leq \epsilon$ for all $a\in \Tilde X$ and $b\in \Tilde X \cap \Tilde Y$.
    \end{enumerate}
    Then there is a unit vector $v\in \Tilde C_n \Tilde X$ such that 
    \begin{equation}\label{upper bound for the smallest eigenvalue}
        \left \| \Tilde {\mathcal L} _{\Tilde X \cup \Tilde Y} (v) \right\| \leq \epsilon (n+1)
    \end{equation}
    In particular, the smallest nonzero eigenvalue of $\Tilde {\mathcal L} _{\Tilde X \cup \Tilde Y, n}$ is bounded by $\epsilon (n+1)$.
\end{theorem}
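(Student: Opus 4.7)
The plan is to produce a concrete unit vector $v\in\tilde C_n\tilde X$ that already lies in $\Ker\tilde{\mathcal L}_{\tilde X,n}$ but, once pushed forward into $\tilde C_n(\tilde X\cup\tilde Y)$ via the inclusion $\iota$, is orthogonal to $\Ker\tilde{\mathcal L}_{\tilde X\cup\tilde Y,n}$; then to estimate $\|\tilde{\mathcal L}_{\tilde X\cup\tilde Y,n}(v)\|$ by splitting $\tilde{\mathcal L}_{\tilde X\cup\tilde Y,n}=\tilde{\mathcal L}^{\textit{down}}_{\tilde X\cup\tilde Y,n}+\tilde{\mathcal L}^{\textit{up}}_{\tilde X\cup\tilde Y,n}$ and using Lemmas \ref{norm of a union} and \ref{inequality about a union}.

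First, I would combine the Discrete Hodge Theorem \ref{Discrete Hodge Theorem} with the isomorphism \ref{eq:weighted_l_ker_isom} and hypothesis (3) to conclude that $\dim\Ker\tilde{\mathcal L}_{\tilde X\cup\tilde Y,n}=\dim\Ker\tilde{\mathcal L}_{\tilde X,n}-1$. Composing $\iota$ restricted to $\Ker\tilde{\mathcal L}_{\tilde X,n}$ with the orthogonal projection onto $\Ker\tilde{\mathcal L}_{\tilde X\cup\tilde Y,n}$ therefore yields a linear map whose domain has exactly one more dimension than its codomain, so its kernel is at least one-dimensional; selecting a unit $v$ in this kernel gives the desired orthogonality. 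This construction also ensures that any norm bound $\|\tilde{\mathcal L}_{\tilde X\cup\tilde Y,n}(v)\|\le\epsilon(n+1)$ translates immediately into a bound on the smallest nonzero eigenvalue of $\tilde{\mathcal L}_{\tilde X\cup\tilde Y,n}$, which is the ``in particular'' statement.

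Second, I would handle the two summands of $\tilde{\mathcal L}_{\tilde X\cup\tilde Y,n}(v)$ separately. Because $v\in\Ker\tilde{\mathcal L}_{\tilde X,n}\subseteq\Ker\tilde{\mathcal B}_n^{\tilde X}$ by Proposition \ref{fundamentals of laplacian}(b) and the weighted boundary map of $\tilde X\cup\tilde Y$ agrees with that of $\tilde X$ on $\tilde X$-supported chains (weights on shared simplices match by the definition of $\tilde X\cup\tilde Y$), we have $\tilde{\mathcal B}_n^{\tilde X\cup\tilde Y}(v)=0$, and hence $\tilde{\mathcal L}^{\textit{down}}_{\tilde X\cup\tilde Y,n}(v)=0$. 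For the up-part, Lemma \ref{norm of a union} applies thanks to hypothesis (4), producing $\|\tilde{\mathcal L}^{\textit{up}}_{\tilde X\cup\tilde Y,n}(v)\|\le\|\tilde{\mathcal L}^{\textit{up}}_{\tilde X,n}(v)+\tilde{\mathcal L}^{\textit{up}}_{\tilde Y,n}(\pi_{\tilde Y}(v))\|$; the first term vanishes by Proposition \ref{fundamentals of laplacian}(b) again, and since $\pi_{\tilde Y}(v)$ is supported on $\tilde X\cap\tilde Y=\partial_n\tilde Y$, Lemma \ref{inequality about a union} bounds the second term by $(n+1)\|\pi_{\tilde Y}(v)\|$.

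The hard part, and the step that consumes hypothesis (5) together with the filtration condition (1), is reducing $\|\pi_{\tilde Y}(v)\|$ to $\epsilon$. The idea is that every $v\in\Ker\tilde{\mathcal L}_{\tilde X,n}$ factors as $v=\mathcal W_n^{-1}u$ for an ordinary cycle $u\in\Ker\mathcal B_n$ (via $\tilde{\mathcal B}_n=\mathcal W_{n-1}^{-1}\mathcal B_n\mathcal W_n$), so the coefficients of $v$ on the simplices of $\partial_n\tilde Y$ carry a division by the weights $w_X(b)$, which hypothesis (5) forces to be much larger than the weights elsewhere in $\tilde X$. This suppresses the $\partial_n\tilde Y$-components of $v$ by a factor of $\epsilon$ relative to the overall norm $\|v\|=1$; within the at-least-one-dimensional kernel of the projection map from the first step, I expect one can always arrange the chosen $v$ so that this suppression is realized. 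Combining the three estimates then yields $\|\tilde{\mathcal L}_{\tilde X\cup\tilde Y,n}(v)\|\le\epsilon(n+1)$, completing the proof.
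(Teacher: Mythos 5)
Your construction of $v$ is fine and essentially matches the paper's: the paper produces the same object via Mayer--Vietoris plus Gram--Schmidt against $\Img \Tilde{{\mathcal B}}_{n+1}^{\Tilde X}$, while you get it by dimension counting, but in both cases $v$ is a unit harmonic chain of $\Tilde X$ whose class dies in $\Tilde X \cup \Tilde Y$, the down-part and the $\Tilde X$ up-part vanish, and everything reduces to bounding $\| \Tilde{\mathcal L}^{\textit{up}}_{\Tilde Y} \pi_{\Tilde Y}(v) \|$. The genuine gap is in your third paragraph: you try to place the factor $\epsilon$ in $\|\pi_{\Tilde Y}(v)\|$, but that quantity cannot be driven below $\epsilon$ and the paper only ever uses $\|\pi_{\Tilde Y}(v)\| \leq \|v\| = 1$. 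Your heuristic --- that $v = \mathcal W_n^{-1} u$ for an ordinary cycle $u$, so the components of $v$ on $\partial_n \Tilde Y$ are suppressed by the large weights there --- does not close, because nothing controls the coefficients of $u$ on $\partial_n \Tilde Y$ relative to its coefficients elsewhere; the harmonic representative of the dying class generically carries order-one mass on $\partial_n \Tilde Y$ (in the motivating annulus example $w_X$ is constant, so no suppression at all occurs), and your own phrase ``I expect one can always arrange the chosen $v$'' is exactly the unproved step.

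The mechanism the paper actually uses is different: the $\epsilon$ lives inside the matrix entries of $\Tilde{\mathcal L}^{\textit{up}}_{\Tilde Y}$ restricted to $\partial_n \Tilde Y$. Each such entry has the form $\pm\, w(f)^2 / \bigl( w(c_i) w(c_j) \bigr)$ with $f \in Y_{n+1}$ and $c_i, c_j \in \partial_n \Tilde Y$; the filtration condition (hypothesis (1)) gives $w(f) \leq w(c_j)$, so the entry is at most $w(f)/w(c_i)$, which hypothesis (5) bounds by $\epsilon$ (note the hypothesis as literally printed compares $w_X(a)$ for $a \in \Tilde X$ against $w_X(b)$ for $b \in \Tilde X \cap \Tilde Y$, which taken at face value forces $\epsilon \geq 1$; what the proof uses is the ratio of weights of $(n+1)$-cells of $\Tilde Y$ to weights of cells of $\Tilde X \cap \Tilde Y$). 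Only after extracting this factor of $\epsilon$ does the unweighted operator $\mathcal L^{\textit{up}}$ remain, and that is the operator to which Lemma~\ref{inequality about a union} applies --- your direct application of that lemma to the weighted $\Tilde{\mathcal L}^{\textit{up}}_{\Tilde Y}$ skips precisely the step where hypotheses (1) and (5) do their work. To repair your argument, replace the attempt to bound $\|\pi_{\Tilde Y}(v)\|$ by the entrywise estimate $w(f)^2/(w(c_i)w(c_j)) \leq \epsilon$ and then invoke Lemma~\ref{inequality about a union} on the unweighted residue.
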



\vspace{0.5em}\noindent
\changedSBa{The interpretation of the above Theorem is that in a weighted simplicial complex, $\Tilde{X}\cup\Tilde{Y}$, even if there does not exist a hole (and hence no $1$-cycle in the kernel of the weighted Laplacian), if the simplices on a sub-complex, $\Tilde{Y}$ (and hence $\Tilde{X}\cap\Tilde{Y}$), have low weights compared to the simplices on the the rest of the complex (by a factor of $\epsilon$, hence creating an \emph{almost-hole}), then the spectrum of the weighted Laplacian will have a low eigenvalue that is bounded by $\epsilon(1+n)$.}

\changedSBa{With a little extra} effort \changedSBa{it is possible} to extend the Theorem \ref{main theorem} to a sequence of successive union operations \changedSBa{thus extending the result to weighted complexes with multiple almost-holes}:
 
\begin{proposition}\label{extended main theorem}
    Let $\Tilde X$ be a weighted simplicial complex. For each natural number $k \geq 1$ and for each $1 \leq i \leq k$, define $\epsilon_i>0$ and a weighted simplicial complex $\Tilde Y_i$. Assume that each triple $(\Tilde X \cup \bigcup_{j=1}^{i-1} \Tilde Y_j,  \Tilde Y_i, \epsilon_i)$ satisfies the conditions (1)-(5) of Theorem \ref{main theorem} and that $\Tilde Y_i$'s are pairwise disjoint.
    Then for each $1 \leq i \leq k$ there is a unit vector $v_i\in \Tilde C_n \Tilde X$ such that 
    \begin{equation}\label{successive upper bound}
        \left \| \Tilde {\mathcal L}_{\Tilde X \cup \bigcup_{j=1}^{k} \Tilde Y_j} (v_i) \right\| \leq \epsilon_i (n+1),
    \end{equation}
    and for each $1\leq i < j \leq k$, 
    \begin{equation}\label{orthogonal condition for v_i}
        \langle v_i, v_j \rangle = 0.
    \end{equation}
    Therefore, if each $\epsilon_i$ is sufficiently small, then (\ref{successive upper bound}) is an upper bound for distinct eigenvalues of $\Tilde {\mathcal L}_{\Tilde X \cup \bigcup_{j=1}^{k} \Tilde Y_j}$.
\end{proposition}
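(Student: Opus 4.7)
The plan is a straightforward induction on the number of gluings. At each step $i$, apply Theorem \ref{main theorem} to the triple $(\tilde X_{i-1}, \tilde Y_i, \epsilon_i)$ (writing $\tilde X_m := \tilde X \cup \bigcup_{j=1}^m \tilde Y_j$), whose five hypotheses hold by the assumptions of the proposition. This produces a unit vector $v_i$ with $\|\tilde{\mathcal{L}}_{\tilde X_i}(v_i)\| \le \epsilon_i(n+1)$. Inspection of the proof of Theorem \ref{main theorem} shows that $v_i$ is realized (up to a scalar) as $\tilde{\mathcal{B}}_{n+1}(\mathcal{W}_{n+1}^{-1}\sum_{f\in \tilde Y_i}f)$, i.e.\ the cycle in $\tilde X_{i-1}$ that the low-weight filling $\tilde Y_i$ turns into a boundary. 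Interior $n$-faces of $\tilde Y_i$ cancel in this sum, so $\mathrm{supp}(v_i)\subseteq \partial_n \tilde Y_i\subseteq \tilde X$; by the pairwise-disjointness hypothesis these boundaries are themselves pairwise disjoint, so $\langle v_i,v_j\rangle = 0$ for $i\neq j$, yielding (\ref{orthogonal condition for v_i}) at once.

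The main technical step is the persistence claim $\tilde{\mathcal{L}}_{\tilde X_k}(v_i) = \tilde{\mathcal{L}}_{\tilde X_i}(v_i)$, i.e.\ that the bound proved at step $i$ survives all later gluings. Split $\tilde{\mathcal{L}} = \tilde{\mathcal{L}}^{\text{down}} + \tilde{\mathcal{L}}^{\text{up}}$. The only $n$-simplices that can receive a nonzero coordinate from $\tilde{\mathcal{L}}(v_i)$ are those either sharing an $(n-1)$-face with, or cobounding an $(n+1)$-simplex with, some simplex in $\mathrm{supp}(v_i)=\partial_n \tilde Y_i$. Pairwise disjointness of the $\tilde Y_j$'s forces every such simplex to lie in $\tilde X_i$, so the new simplices introduced by $\tilde Y_j$ for $j>i$ do not interact with $v_i$ at all, and (\ref{successive upper bound}) follows from the single-step bound.

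For the final ``distinct eigenvalues'' conclusion, apply the Courant--Fischer min-max characterization to the nested subspaces $V_i := \mathrm{span}(v_{\sigma(1)},\ldots,v_{\sigma(i)})$, where $\sigma$ sorts the $\epsilon_j$'s in increasing order. Any unit $v=\sum_{\ell\le i} c_\ell v_{\sigma(\ell)}\in V_i$ satisfies $\|\tilde{\mathcal{L}}v\|\le \sum |c_\ell|\,\epsilon_{\sigma(\ell)}(n+1)\le \sqrt{i}\,\epsilon_{\sigma(i)}(n+1)$, so the $i$-th smallest eigenvalue of $\tilde{\mathcal{L}}_{\tilde X_k}$ is bounded by this quantity; under the ``sufficiently small $\epsilon_j$'' clause these bounds identify $k$ distinct small eigenvalues, separated from the rest of the spectrum by a genuine gap. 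I expect the main obstacle to be pinning down the support property of $v_i$ from the proof of Theorem \ref{main theorem} precisely enough to make the persistence argument rigorous, and confirming that the informal hypothesis ``$\tilde Y_i$'s are pairwise disjoint'' is strong enough to exclude any spurious coupling between different $\partial_n\tilde Y_i$'s under $\tilde{\mathcal{L}}$; the $\sqrt{i}$ slack in the final eigenvalue bound is a minor looseness absorbed by the ``sufficiently small'' clause.
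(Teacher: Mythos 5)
Your overall strategy---one vector per gluing, coming from the cycle that the low-weight filling $\Tilde Y_i$ turns into a boundary---is close to the paper's, but there is a genuine gap at exactly the point you flagged: the support claim for $v_i$. The vector produced by the proof of Theorem~\ref{main theorem} is \emph{not} $\Tilde{\mathcal B}_{n+1}(\mathcal W_{n+1}^{-1}\sum_{f\in\Tilde Y_i}f)$; that expression describes the cycle $w_i$ supported on $\partial_n\Tilde Y_i$, which is only the \emph{input} to the construction. The actual $v_i$ is obtained by orthogonalizing $w_i$ against $\Img \Tilde{\mathcal B}_{n+1}(\Tilde C_{n+1}\Tilde X)$, and this step is not optional: it is what makes $\Tilde{\mathcal B}_{n+1}^\ast v_i$ vanish on $\Tilde C_{n+1}\Tilde X$, so that after Lemma~\ref{norm of a union} only the term $\Tilde{\mathcal L}^{\textit{up}}_{\Tilde Y_i}\pi_{\Tilde Y_i}(v_i)$ survives and is bounded by $\epsilon_i(n+1)$. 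If you keep $w_i$ itself (as your formula does), the term $\Tilde{\mathcal L}^{\textit{up}}_{\Tilde X}\pi_{\Tilde X}(w_i)$ does not vanish and is not controlled by $\epsilon_i$; if you do perform the orthogonalization, the projection onto the boundary space of $\Tilde X$ generically spreads the support of $v_i$ over all of $\Tilde X$, so $\mathrm{supp}(v_i)\subseteq\partial_n\Tilde Y_i$ fails. Both of your key steps---orthogonality of the $v_i$ via disjointness of supports, and the persistence claim $\Tilde{\mathcal L}_{\Tilde X_k}(v_i)=\Tilde{\mathcal L}_{\Tilde X_i}(v_i)$ via locality of the Laplacian around $\partial_n\Tilde Y_i$---rest on this false support property, so neither goes through as written.

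The paper avoids both problems by constructing all $k$ vectors at once rather than inductively. The cycles $w_1,\dots,w_k$ \emph{do} have pairwise disjoint supports (they live on the pairwise disjoint boundaries $\partial_n\Tilde Y_i$), hence $\{w_i/\|w_i\|\}$ is an orthonormal family in $\Ker \Tilde{\mathcal B}_n$; one then applies a single orthogonal transformation $Q$ of $\Ker \Tilde{\mathcal B}_n$ carrying each $w_i/\|w_i\|$ into $(\Img \Tilde{\mathcal B}_{n+1})^\perp$. Orthogonality of the $v_i:=Q(w_i/\|w_i\|)$ is then automatic because $Q$ preserves inner products, and the norm bound is re-derived for each $v_i$ directly in the final complex $\Tilde X\cup\bigcup_{j=1}^{k}\Tilde Y_j$, so no persistence lemma is needed. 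Your Courant--Fischer argument for the last sentence (with its harmless $\sqrt{i}$ slack) is fine once correct $v_i$'s are in hand, but the construction of the $v_i$'s themselves needs to be replaced along these lines.
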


\section{\changedSB{Results and Applications}} 
\label{sec:application}

\changedSB{As an initial demonstration of the proposed weighted combinatorial Laplacian, we refer to the result presented in Figure~\ref{fig:example of l1}, where it can be seen that, despite a simplicial complex not having holes, regions of low-weight simplices (the interior of the three circular rings around which most of the sensors are placed) can be identified by the small eigenvalues in the spectrum of $\tilde{\mathcal{L}}_1$.}

Figure~\ref{static network} shows the computational results for a sequence of \changedSBa{weighted} simplicial complexes, where the weights of some of simplices gradually decrease. Each of the five simplicial complexes \changedSBa{have} no combinatorial holes defined on it, but the two smallest eigenvalues are getting smaller and smaller as the two ``almost holes'' get significant.

\begin{figure}
\centering
\includegraphics[scale=0.25]{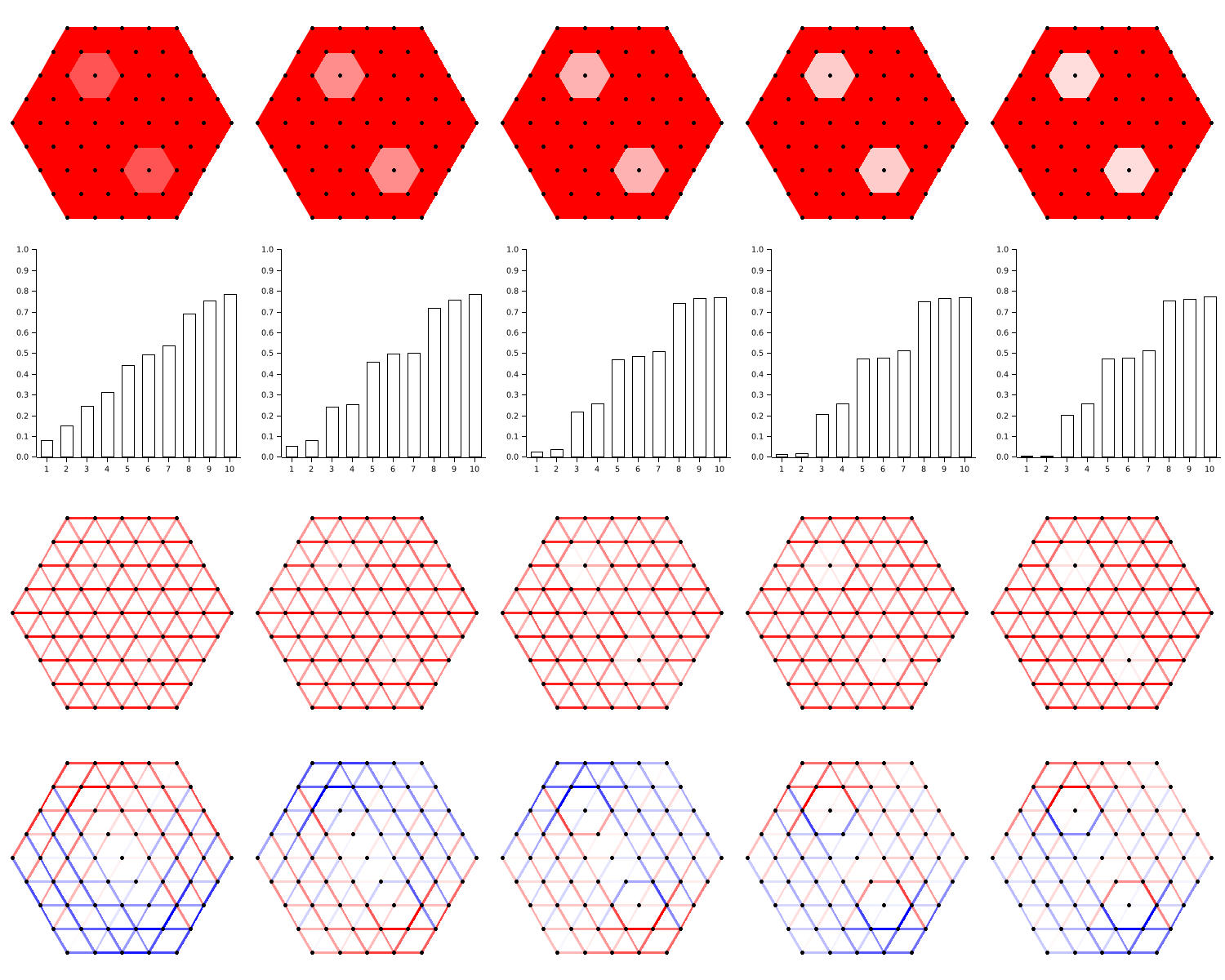}
\caption{The top row shows the five simplicial complexes whose weights on the 1- and 2-simplices are depicted in color: The color red means weight 1 and the color white means weight 0. The second row shows the first ten eigenvalues of the corresponding weighted Laplacian. The third row (resp. the fourth row) shows the coloring of edges by the eigenvector of the first (resp. the second) eigenvalue. }
\label{static network}
\end{figure}

\subsection{Dynamic Coverage Repair of Sensor Network}\label{Coverage Repair Method}
Simplicial complexes are proved to be useful in studying coverage problems of the sensor network. Given $n$ sensors $\mathcal S = \{s_1, \cdots, s_n\}$ in $\R^2$ and $\epsilon>0$, the simplicial complex $\mathcal R(\epsilon)$ defined by
\begin{align*}
    \mathcal R(\epsilon) = \{c \subset \mathcal S | \text{ for each $a, b \in c$, $d(a,b)<\epsilon$ }\}
\end{align*} 
is called a \textit{Vietoris-Rips complex}. One of the initial results that successfully used simplicial complexes in the coverage problem of sensor networks was by Silva et al. \citep{silva}, which showed that whether or not the a region $D\subseteq \R^2$ is covered by a sensor network is sufficient to perform a test on the algebraic image of its boundary $\partial D$ in the simplicial chain complex. Using the Čech complex, a subcomplex of Vietoris-Rips complex, Derenick et al. \citep{derenick} gave a distributed motion-planning algorithm for mobile robot team that can collapse the homology. They used a combinatorial method to detect the precise location of the holes and used it to compute the motion plans. 

In this section, we present a more continuous, parameter-free coverage repair algorithm using the weighted combinatorial Laplacian. Since the weighted Laplacian can measure the "almost holes", there does not have to be combinatorially well-defined holes in the simplicial complex. Hence we can take our simplicial complex to be just a contractible complex (a hole-free complex), and thus, as opposed to the case of using Vietoris-Rips complex (or Čech complex), the algorithm does not depend on the particular choice of $\epsilon$. Another advantage of using weighted Laplacian is the fact that the extremal eigenvalues are naturally piece-wise smooth functions of weights, and this gives an obvious candidate for an algorithm, that is, the gradient descent of the extremal eigenvalues with respect to the weights. If we define the weights to be again smooth functions of positions of the mobile robots in the plane, then we have a piece-wise smooth function that measures the significance of holes with respect to the positions of sensors.

Let us now be more specific. Given a set $S$ of points embedded in the Euclidean plane $\R^2$, we construct a 2-dimensional contractible complex $D$ by the \text{Delaunay triangulation}, that is, the triangulation on $S$ such that the open disk bounded by the circumcircle of any triangle of $D$ does not contain any point in $S$. In fact, the use of Delaunay triangulation (or its dual, the \textit{Voronoi partition}) has been a natural choice for sensor coverage problems; for example, see \citep{Cortes}, \citep{Luciano}. We let the weight of each edge be the inverse of the Euclidean length of it in $\R^2$ and then let the weight of a 2-simplex be the product of the weights of its boundary, that is,
\begin{align*}
    w_2(\{v_0, v_1, v_2\}) := w_1(\{v_0, v_1\}) w_1(\{v_0, v_2\}) w_1(\{v_1, v_2\})
\end{align*}
for each 2-simplex $\{v_0, v_1, v_2\} \in D$. Under the assumption that the weight of the edge is always greater than 1 (which is possible by scaling the embedding as necessary), this choice of weight satisfy the filtration condition introduced in Section~\ref{sec:weighted-laplacian}, and thus the resulting weighted simplicial complex $X$ has all the properties shown in the section. We then compute the first weighted Laplacian operator $\Tilde{\mathcal{L}}_1(X)$, its eigenvector $v$ corresponding to the non-zero smallest eigenvalue $\lambda$, and the gradient $d\lambda/d S$ of $\lambda$ with respect to the embedding of $S$. Note that we need $v$ to compute the gradient because $d\lambda / d\Tilde{\mathcal{L}}_1(X)=vv^T$. We can then update the embedding $S$ by the positive scalar multiple of $d\lambda/d S$, and therefore obtains the following algorithm:

\begin{figure}
\centering
\includegraphics[width=0.99\textwidth]{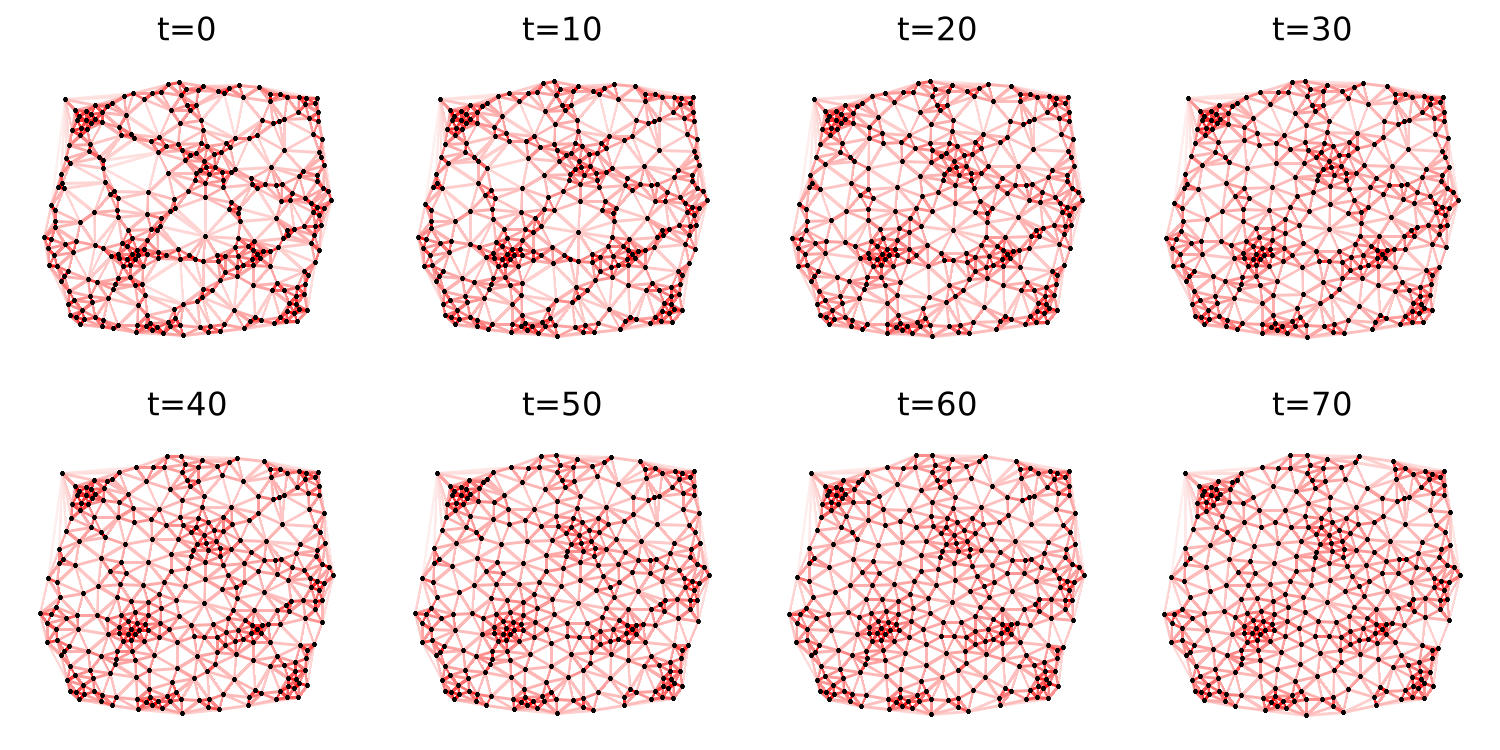}
\caption{A computational result of the proposed coverage repair algorithm with 300 mobile robots. Black points represents the positions of robots and the red line segments show the edges created by the Delaunay triangulation at each routine. Each triangle is filled by a 2-simplex, and thus the complex is a planar, contractible complex. \changedSBa{Note how the complex starts of with multiple almost-holes at $t=0$, but over time the mobile sensors redistribute themselves to reduce the almost-holes in the complex ($t=70$). However, note that there are still some clustering of the robots since in this dynamic coverage repair we are only increasing the smallest eigenvalue of $\Tilde{\mathcal{L}}_1$ and not considering the spectrum of $\Tilde{\mathcal{L}}_0$.} 
}
\label{coverage repair}
\end{figure}

\begin{algorithm}
\caption{Coverage repair routine with $\Tilde{\mathcal L}_1$}\label{alg:coverage repair in convex domain}
\begin{algorithmic}
\Require Positions of sensors $S \subset \R^2$
\State $D \gets $ \textit{DelaunayTriangulation}$(S)$ 
\State $L \gets \Tilde{\mathcal{L}}_1(D, S)$
\State $v \gets $ The eigenvector that corresponds to the smallest non-zero eigenvalue of $L$
\State Compute the gradient $d\lambda/d S$ using $S$ and $v$, and update $S$ according to the gradient. 
\end{algorithmic}
\end{algorithm} 

Here, \textit{DelaunayTriangulation}$(S)$ is the subroutine that computes the Delaunay triangulation based on the embedding $S$ of points in $\R^2$, and $\Tilde{\mathcal{L}}_1(D, S)$ is the subroutine that computes the weighted Laplacian matrix. A computational result of the algorithm with 80 iterations is shown in Figure \ref{coverage repair}.

\begin{figure}
\centering
\includegraphics[width=0.99\textwidth]{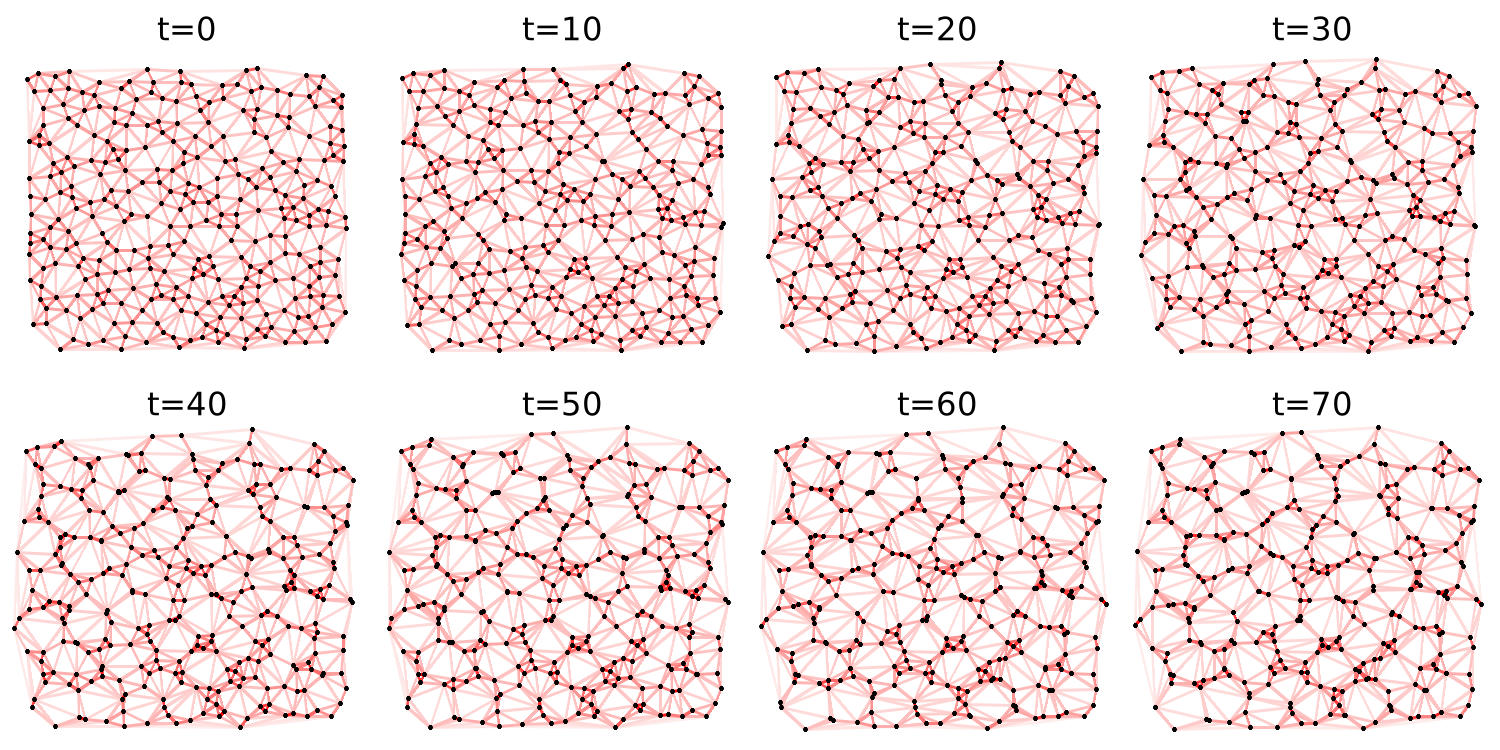}
\caption{A computational result of the caging algorithm with 300 mobile robots with $k=50$. \changedSBa{Starting from a configuration with no well-defined almost-holes,} the algorithm thus tries to create 50 holes in the given complex.}
\label{caging}
\end{figure}

\subsection{\changedSB{Creation of Holes in Sensor Networks for} Caging}
As an inverse approach to our method, we show that it is also possible to create holes in the sensor network. Let $k$ be the number of holes that we would like to create in the simplicial complex. Then it is possible to create $k$-holes in the complex by performing the gradient descent on the $k$th smallest non-zero eigenvalue. A result of this algorithm is shown in Figure \ref{caging}.

\subsection{\changedSBa{Relative Chain Complex and its Application to} Coverage Repair of Sensor Network in a Domain with Obstacles}
It is often the case that the mobile robots need to explore a domain with obstacles. In this situation, however, the Algorithm \ref{alg:coverage repair in convex domain} cannot be used because the Laplacian would not know whether the hole it sees is a hole in the coverage or an obstacle. To handle this issue, we let the Laplacian operator forget about the hole made by the obstacles so that it can make sure the hole it sees is the hole of the coverage. This leads us to the notion of a relative chain complex, and we further define the relative Laplacian operator.

Let $\{C_n\}$ be the chain complex, and let $\{C_n^\prime\}$ be a subcomplex of $\{C_n\}$, that is, each $C_n^\prime$ is a vector subspace of $C_n$, and $c \in C_n^\prime$ implies $\partial_n(c) \in C_{n-1}^\prime$. Then we define the relative chain complex $\{C_n/C_n^\prime\}$ to be the chain complex with the $n$-chains $C_n/C_n^\prime$ and the induced boundary maps $\partial_n: C_n/C_n^\prime \to C_{n-1}/C_{n-1}^\prime$. Because we have not assumed anything beyond $\{C_n\}$ being a chain complex, this construction of the relative complex works for the weighted chain complex we have defined in the previous section. 

Having these notions in mind, we come back to the coverage repair algorithm with obstacles. Let $S = \{s_1,\cdots, s_k\} \subseteq \R^2$ be the set of positions of the $k$ mobile robots distributed in the planar domain. Assume further that there are $m$ obstacles $P_1, \cdots, P_m$ in the domain and that, for each $i=1,\cdots, m$, there is an obstacles $P_i$, represented as a disjoint embedding $\coprod_{i=1}^m D_i$ of discs $D_i$ into $\R^2$. Then we can choose a sufficiently fine discretization $\overline{P}_i$ of the boundary of $P_i$ as a 1-dimensional simplicial complex. Now let $X$ be the simplicial complex build by the Delaunay triangulation over the union of $S$ and the vertices of $\overline{P}_i$'s, and let $Y_i$ be the minimal subcomplex of $X$ containing all the 2-simplexes $X$ that intersects the interior of the obstacle $P_i$. If the $\overline P_i$ is a sufficiently fine discretization of the boundary of $P_i$, then we can guarantee that $P_i \subset Y_i$. We can now recursively compute the relative chain complex $C^i=\{ C^{i-1} / C_n Y_i \}$, with the base case $C^0 = C_n X$. Then the Laplacian $\Tilde{ \mathcal L_1}(C^m)$ over the resulting chain complex $C^m$ is the weighted Laplacian operator that forgets about the holes created by obstacles. We know that $\Ker \Tilde{ \mathcal L_1}(C^m) = 0$, because for each $i=1,\cdots,m$, the following part of the long exact sequence
\begin{center}
\begin{tikzcd}
    0 = \Tilde H_1(C^{i-1}) \arrow{r} & \Tilde H_1(C^i) \arrow{r}{\Tilde{\mathcal B}_{1}}& \Tilde H_0(C_n Y_i) = 0
\end{tikzcd}
\end{center}
yields that $\Tilde H_1(C^i)=0$. Thus the smallest eigenvalue of $\Tilde{ \mathcal L_1}(C^m)$ is non-zero, and therefore we have obtained the following algorithm:
\begin{algorithm}
\caption{Coverage Repair Routine in a Domain with Obstacles Using $\Tilde{\mathcal L}_1$}\label{alg:coverage repair with obstacle}
\begin{algorithmic}
\Require Positions of mobile robots $S \subset \R^2$, Discritization $\overline{P}_i$ of the boundary of the obstacles $P_i$ ($i=1,\cdots, m$)
\State $S \gets S \cup \bigcup_{i=1}^m $\textit{VerticesOf}$(\overline{P}_i)$
\State $D \gets $ \textit{DelaunayTriangulation}$(S)$ 
\ForAll {$i \in \{1,\cdots,m\}$}
    \While {$\overline{P}_i \not \subset Y_i$ }
        \State $\overline{P}_i \gets $ \textit{Subdivide}$(\overline{P}_i)$
        \State $S \gets S \cup \bigcup_{i=1}^m $\textit{VerticesOf}$(\overline{P}_i)$
        \State $D \gets $ \textit{DelaunayTriangulation}$(S)$ 
    \EndWhile
\EndFor
\State $L \gets \Tilde{\mathcal{L}}_1(D, S)$
\State $v \gets $ The eigenvector that corresponds to the smallest eigenvalue of $L$
\State $d\lambda d S =$ Gradient of $\lambda$ with respect to $S$
\State Update $S$ according to $d\lambda d S$ and the ruspulsive gradient from the obstacles  $\sum R(P_i)$.
\end{algorithmic}
\end{algorithm}

Here, \textit{VerticesOf($A$)} is the function that takes in a simplicial complex $A$ and spits out the set of vertices of $A$, and $R(P_i)$ is the repulsive gradient that tries to keep mobile robots away from the obstacles $P_i$ and is nonzero only near the boundary of $P_i$. Comparing this algorithm with the Algorithm~\ref{alg:coverage repair in convex domain}, the repulsive gradient $R(P_i)$ might seem a little unnatural. However, it is necessary; otherwise, some mobile robots will try to get inside the obstacles because the relative Laplacian operator forgets the concrete geometry of the obstacles.
\changedSBa{A numerical result from this algorithm is presented in Figure~\ref{fig:result-obstacle}.}

\begin{figure}
\centering
\includegraphics[width=0.99\textwidth]{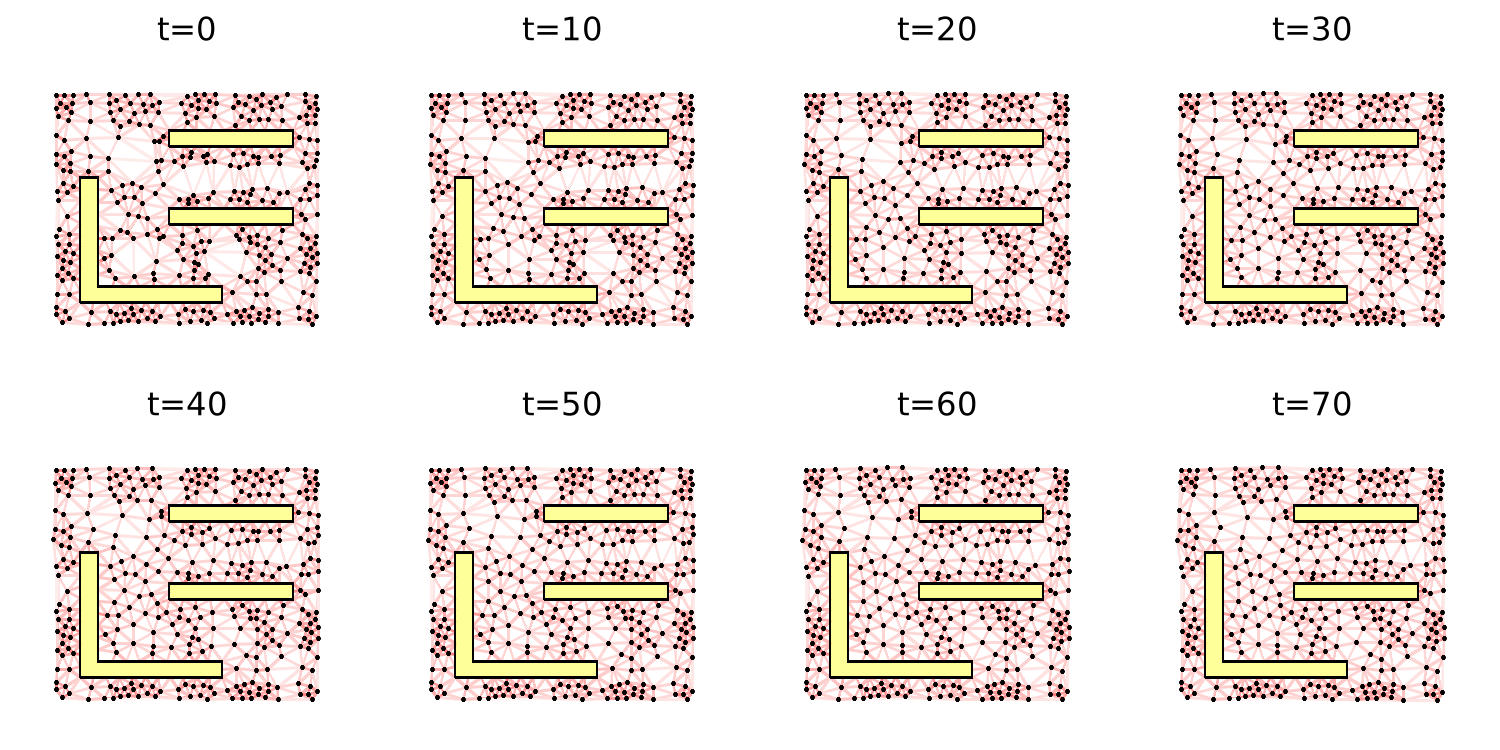}
\caption{A computational result of the coverage repair algorithm \changedSBa{in an environment} with obstacles. The 250 mobile robots are represented as the black points. There are three obstacles represented as the yellow polygons.}
\label{fig:result-obstacle}
\end{figure}


\appendix

\section{Proofs} \label{appendix:proofs}
\begin{proof}[Proof of Proposition \ref{fundamentals of laplacian}]
	
\begin{enumerate}[label=(\alph*),leftmargin=2em]
    \item We show $\Ker({\mathcal B}_n^\ast {\mathcal B}_n) \subseteq \Ker {\mathcal B}_n$ only, because the other inclusion is trivial. Suppose that ${\mathcal B}^\star {\mathcal B}_n(x) = 0$ for some $x\in C_n$. Then we must have $0=\langle x, {\mathcal B}_n^\ast {\mathcal B}_n x \rangle = \langle {\mathcal B}_n x, {\mathcal B}_n x \rangle$ by the self-adjointness, hence ${\mathcal B}_n x = 0$. The proof of the other statement is similar.

    \item We show $\Ker ({\mathcal B}_n^\ast {\mathcal B}_n + {\mathcal B}_{n+1} {\mathcal B}_{n+1}^\ast) \subseteq \Ker({\mathcal B}_n^\ast {\mathcal B}_n) \cap \Ker({\mathcal B}_{n+1} {\mathcal B}_{n+1}^\ast)$ only, because the other inclusion is trivial. If $x\in \Ker({\mathcal B}_n^\ast {\mathcal B}_n + {\mathcal B}_{n+1} {\mathcal B}_{n+1}^\ast)$, then by the bilinearity of the inner product, we have $$0 = \langle ({\mathcal B}_n {\mathcal B}_n^\ast + {\mathcal B}_{n+1} {\mathcal B}_{n+1}^\ast)x, x \rangle = \langle {\mathcal B}_n^\ast {\mathcal B}_n x, x \rangle + \langle {\mathcal B}_{n+1} {\mathcal B}_{n+1}^\ast x, x \rangle = \langle {\mathcal B}_n x, {\mathcal B}_n x \rangle + \langle {\mathcal B}_{n+1}^\ast x, {\mathcal B}_{n+1}^\ast x \rangle.$$ The sum of two non-negative numbers being zero implies all summands are zero, and it is the case here because the inner product of two identical vectors are non-negative. Hence we conclude that ${\mathcal B}_n x=0$ and ${\mathcal B}_{n+1}^\ast x=0$.

    \item This is the direct consequence of ${\mathcal B}_n {\mathcal B}_{n+1}=0$ and ${\mathcal B}_{n+1}^\ast {\mathcal B}_n\ast=0$
\end{enumerate}
\end{proof}

\begin{proof}[Proof of Theorem \ref{Discrete Hodge Theorem}]
    For each $n$, we have
    \begin{align*}
        \Ker \mathcal L_n &= \Ker (\mathcal L_n^{\textit{down}} + \mathcal L_n^{\textit{up}}) \\
        &= \Ker \mathcal L_n^{\textit{down}} \cap \Ker \mathcal L_n^{\textit{up}} \\
        &= \Ker {\mathcal B}_n \cap \Ker {\mathcal B}^\ast_{n+1} \\
        &= \Ker {\mathcal B}_n \cap (\Img {\mathcal B}_{n+1})^\perp \\
        &\cong H_n\,C.
    \end{align*}
\end{proof} 

\begin{proof} [Proof of Proposition \ref{Norm}]
    Recall that all the results about the combinatorial Laplacian stated in Section~\ref{sec:background} applies to our weighted Laplacian as well, and from the Proposition \ref{fundamentals of laplacian}(c) we can deduce that $\| \Tilde{\mathcal{L}}_n \|$ is equal to either $\| \Tilde{\mathcal{L}}_n^{\textit{down}} \|$ or $\| \Tilde{\mathcal{L}}_n^{\textit{up}} \|$. Hence,
    \begin{equation*}
        \| \Tilde{\mathcal L}_n \| \leq \max \left\{ \| \Tilde{\mathcal{L}}_n^{\textit{down}} \|,\; \| \Tilde{\mathcal{L}}_n^{\textit{up}} \| \right\} = \max\left\{ \| \Tilde {\mathcal B}_n \|^2,\; \| \Tilde {\mathcal B}_{n+1} \|^2 \right\} \leq \max\left\{ \| \Tilde {\mathcal B}_n \|_F^2,\; \| \Tilde {\mathcal B}_{n+1} \|_F^2 \right\}.
    \end{equation*}
    Hence it suffices to show $\| \Tilde {\mathcal B}_n \|_F^2 \leq |X_n|(n+1)$ for each $n\geq 0$. In the case of $n=0$, this is trivial. If $n>0$, the weighted boundary matrix has the concrete form
    \begin{align*}
        \Tilde {\mathcal B}_{n} (c) = \sum_{b\in X_{n-1}: \, b \subseteq c} \frac{w_n(c)}{w_{n-1}(b)} b. 
    \end{align*}
    for $c \in X_n$. Therefore, by the filtration condition, we obtain
    \begin{align*}
        \| \Tilde{ \mathcal B} \|_F^2 \leq \sum_{c\in X_n} \; \sum_{b\in X_{n-1}:\, b\subseteq c} \left( \frac{w_{n}(c)}{w_{n-1}(b)} \right)^2 \leq \sum_{c\in X_n} \sum_{b\in X_{n-1}} 1 = |X_n| (n+1),
    \end{align*}
    which completes the proof.
\end{proof}

\begin{proof}[Proof of \ref{norm of a union}]
    Note that we have the following equation;
    \begin{align*}
        \Tilde{\mathcal L}_{\Tilde X \cup \Tilde Y}^{\textit{up}} v = \iota_{\Tilde X} \Tilde{\mathcal L}_{\Tilde X}^{\textit{up}} \pi_{\Tilde X}(v) + \iota_{\Tilde Y} \Tilde{\mathcal L}_{\Tilde Y}^{\textit{up}} \pi_{\Tilde Y}(v) - \iota_{\Tilde X \cap \Tilde Y} \Tilde{\mathcal L}^{\textit{up}}_{\Tilde X \cap \Tilde Y} \pi_{\Tilde X \cup \Tilde Y}(v),
    \end{align*}
    where $\iota_{A}: A \hookrightarrow \Tilde X\cup \Tilde Y$ are natural inclusion maps and $\pi_{A}: \Tilde X\cup \Tilde Y \to A$ are natural projection maps, for $A=\Tilde X,\Tilde Y,\Tilde X\cap \Tilde Y$. We have that $\Tilde{\mathcal L}^{\textit{up}}_{\Tilde X\cap \Tilde Y} = 0$, since $\Tilde X \cap \Tilde Y$ does not have any $(n+1)$-simplex by assumption. Furthermore, the inclusion maps preserve the norm, hence
    \begin{align*}
        \| \Tilde{\mathcal L}^{\textit{up}}_{\Tilde X \cup \Tilde Y} v\| = \| \iota_{\Tilde X} \Tilde{\mathcal L}^{\textit{up}}_{\Tilde X} \pi_{\Tilde X}(v) + \iota_{\Tilde Y} \Tilde{\mathcal L}^{\textit{up}}_{\Tilde Y} \pi_{\Tilde Y} (v) \| \leq \| \Tilde{\mathcal L}^{\textit{up}}_{\Tilde X} \pi_{\Tilde X}(v) \| + \| \Tilde{\mathcal L}^{\textit{up}}_{\Tilde Y} \pi_{\Tilde Y} (v) \|.
    \end{align*}
\end{proof}

\begin{proof}[Proof of \changedSB{Lemma}~\ref{inequality about a union}]
    Choose a total ordering of $V$, then it will induce a total ordering on $X_n$, for which we will denote $\prec$. For $c_1 \prec \cdots \prec c_{|X_n|} \in X_n$, write $v = \alpha_1 c_1 + \cdots + \alpha_{|X_n|} c_{|X_n|}$ for some $\alpha_i \in \R$ ($1 \leq i \leq |X_n|$). By assumption, $\alpha_i=0$ for $c_i \not \in \partial_n X$. Consider the set $F = \{f\in X_{n+1}: f \supseteq c \text{ for some } c \in (\partial_n X) \cap X_n \}$. Now define a permutation $\tau$ on $X_n$ by 
    \begin{align*}
        \tau(c_i) = 
        \begin{cases}
            \min\{c_j \subseteq f: c_j \succ c_i\} & \text{if $c_i \subseteq f$ for some $f\in F$ and $i < n$} \\
            \min\{c_j \subseteq f\} & \text{if $c_i \subseteq f$ for some $f\in F$ and $i = n$} \\
            c_i & \text{otherwise}.
        \end{cases}
    \end{align*}
    $\tau$ is well-defined because if $c_i \subseteq f_1$ and $c_i \subseteq f_2$ for some $f_1, f_2\in F$ then $f_1=f_2$ by assumption, and it is clear that $\tau$ is a cyclic permutation of order $n+1$, that is, $\tau^{n+1}(c_i) = c_i$ for each $i$. On the other hand, we have
    \begin{align*}
        \left \| \mathcal L^{\textit{up}} v \right \| = \left \| \sum_i \sigma(c_i,f_i) \sum_{j: c_i \cup c_j \subseteq f_i} \sigma(c_j,f_i) \alpha_j c_i \right \| \leq \left \| \; \sum_{i: \alpha_i \neq 0} \sum_{j: c_j \subseteq f_i} |\alpha_j| c_i \right \|.
    \end{align*}
    Now notice that
    \begin{align*}
        \sum_i \sum_{j: c_j \subseteq f_i} |\alpha_j| c_i = \hat{v} + \tau \hat{v} + \cdots + \tau^n \hat{v},
    \end{align*}
    where $\hat v = |\alpha_1| c_1 + \cdots + \left| \alpha_{|X_n|} \right| c_{|X_n|}$. $\tau$ is only a permutation of the basis elements, thus it preserves the norm. Furthermore, $\|v\| = \|\hat v\|$. Hence,
    \begin{align*}
        \| \mathcal L^{\textit{up}} v \| \leq \| \hat{v} \| + \| \tau \hat{v} \| + \cdots + \| \tau^n \hat{v} \| = (n+1)\| \hat v \| = (n+1)\| v \|.
    \end{align*}
\end{proof}

\begin{proof}[Proof of \changedSB{Theorem}~\ref{main theorem}]
    Let $\Tilde X$, $\Tilde Y$ be as above. We have the Mayer-Vietoris sequence about the union $\Tilde X \cup \Tilde Y$, which is an exact sequence of the form
    \begin{center}
    \begin{tikzcd}
        \cdots \arrow{r}& \Tilde H_{n+1}(\Tilde X \cup \Tilde Y) \arrow{r}{{\mathcal B}_{n+1}}& \Tilde H_n(\Tilde X \cap \Tilde Y) \arrow{r}& \Tilde H_n(\Tilde X) \oplus \Tilde H_n(\Tilde Y) \arrow{r}& \Tilde H_n(\Tilde X \cup \Tilde Y) \arrow{r}{{\mathcal B}_n}& \cdots 
    \end{tikzcd}
    \end{center}
    (See \citep{hatcher}). By assumption (2) and (3), we find a cycle $w$ in $\Tilde C_n \Tilde X \cap \Tilde C_n \Tilde Y$ that is trivial in $\Tilde C_n \Tilde Y \subseteq \Tilde C_n X \cup \Tilde C_n Y$ but is nontrivial in both $\Tilde C_n \Tilde X$ and $\Tilde C_n \Tilde X \cap \Tilde C_n \Tilde Y$. Then let $v$ be the unit vector in $(\Tilde {\mathcal B}_{n+1} (\Tilde C_{n+1} \Tilde X))^\perp$ obtained by applying the Gram-Schmidt process to $w$ with respect to the basis vectors of $\Tilde {\mathcal B}_{n+1} (\Tilde C_{n+1} \Tilde X)$. We show that this vector $v$ is the desired vector.

    Since $v$ is sum of the cycle $w$ and boundaries in $\Tilde C_n \Tilde X$, $v$ itself is a cycle in $\Tilde C_n X$, which implies that $\Tilde {\mathcal B}_{\Tilde X \cup \Tilde Y} (v) = 0$.
    Furthermore, since $v$ is orthogonal to the image of $\Tilde C_{n+1} \Tilde X$ under ${\mathcal B}$, we have $\Tilde {\mathcal B}^* v = 0$. Hence, using lemma \ref{norm of a union}, we have
    \begin{equation*}
        \| \Tilde {\mathcal L}_{\Tilde X \cup \Tilde Y} (v) \| = \| \Tilde{\mathcal L}_{\Tilde X \cup \Tilde Y}^{\textit{up}}(v) \| \leq \| \Tilde{\mathcal L}_{\Tilde X}^{\textit{up}}(\pi_{\Tilde X}(v)) + \Tilde{\mathcal L}_{\Tilde Y}^{\textit{up}}(\pi_{\Tilde Y}(v)) \| = \| \Tilde{\mathcal L}_{\Tilde Y}^{\textit{up}}(\pi_{\Tilde Y}(v)) \|.
    \end{equation*}
    Now it suffices to show $\Tilde{\mathcal L}_{\Tilde Y}^{\textit{up}}(v) \leq \epsilon(n+1)$. By assumption (4), each basis element $c \in \Tilde C_n \Tilde Y$ with $\langle c, v \rangle \neq 0$ has at most one cell $f\in \Tilde C_{n+1} \Tilde Y$ such that $\langle c, {\mathcal B}(f) \rangle \neq 0$. Thus, if $\pi_{\Tilde Y}(v) = \alpha_1 c_1 + \cdots + \alpha_k c_k$ where each $c_i$ is a basis element of $\Tilde C_n \Tilde Y$, then for each $i$ with $\alpha_i \neq 0$, there is a unique $f_i$ such that $\langle c_i, {\mathcal B} f_i \rangle \neq 0$. Therefore,
    \begin{align*}
        \left \| \Tilde{\mathcal L}_{\Tilde Y}^{\textit{up}}(\pi_{\Tilde Y}(v)) \right \| &= \left \| \sum_{i=1}^k \sum_{f: f \supseteq c_i} \; \sum_{c_j: c_j \subseteq f } \sigma(c_i, c_j) \frac{w(f)^2}{w(c_i)w(c_j)} \alpha_j c_i \right\| \\
        &= \left \| \sum_{i: \alpha_i \neq 0} \sum_{c_j: c_j \subset f_i } \sigma(c_i, c_j) \frac{w(f_i)^2}{w(c_i)w(c_j)} \alpha_j c_i \right\| \\
        &\leq \left \| \sum_{i=1}^k \sum_{c_j: c_j \subset f_i } \sigma(c_i, c_j) \frac{w(f_i)}{w(c_i)} \alpha_j c_i \right\| & \text{(by the assumption (1))} \\
        &\leq \epsilon \left \| \sum_{i=1}^k \sum_{c_j: c_j \subset f_i } \sigma(c_i, c_j) \alpha_j c_i \right \| & \text{(by the assumption (5))} \\
        &= \epsilon \left \| \mathcal L^{\textit{up}} \pi_{\Tilde Y}(v) \right \| \\
        &\leq \epsilon (n+1) \left \| \pi_{\Tilde Y}(v) \right \| & \text{(by Lemma ~\ref{inequality about a union})}\\
        &\leq \epsilon (n+1),
    \end{align*}
    where the last inequality is by the fact that $v$ is a unit vector. 
\end{proof}

\begin{proof}[Proof of \ref{extended main theorem}]
    For each union operation $(\Tilde X \cup \bigcup_{j=1}^{i-1} \Tilde Y_j) \cup \Tilde Y_i$, choose $w_i$ as we have chosen $w$ in the proof of Theorem $\ref{main theorem}$. Then $w_i/\|w_i\|$ forms an orthonormal basis of $k$-dimensional vector space of $\Tilde C_n \Tilde X$, because $Y_i$'s are pairwise disjoint and $w_i$ takes nonzero values only on $\Tilde Y_i$. By assumption, we can deduce that $\dim (H_n X) \geq k$, and by definition we have $\dim \left(\Ker \Tilde {\mathcal B}_n \cap (\Img \Tilde {\mathcal B}_{n+1})^\perp \right) \geq k$. Hence we can choose an orthogonal transformation $Q: \Ker {\mathcal B}_n \to \Ker {\mathcal B}_n$ such that $v_i := Q(w_i/\|w_i\|) \in (\Tilde {\mathcal B}_{n+1} \; \Tilde C_{n+1})^\perp$ for all $1 \leq i \leq k$. By the choice of $Q$, it is automatic that the equation (\ref{orthogonal condition for v_i}) holds, that $v_i$ is a unit vector, and that $v_i$ is a cycle in $\Tilde C_n(\Tilde X \cup \bigcup_{j=1}^{k} \Tilde Y_j)$. Now we can apply the same argument about $v$ in Theorem \ref{main theorem} to each $v_i$ to verify that (\ref{successive upper bound}) is satisfied.
\end{proof}

\vspace{3em}
\bibliography{citation}

\end{document}